
\typeout{IJCAI--25 Instructions for Authors}


\documentclass{article}
\pdfpagewidth=8.5in
\pdfpageheight=11in

\usepackage{ijcai25}

\usepackage{times}
\usepackage{soul}
\usepackage{url}
\usepackage[hidelinks]{hyperref}
\usepackage[utf8]{inputenc}
\usepackage[small]{caption}
\usepackage{graphicx}
\usepackage{amsmath}
\usepackage{amsthm}
\usepackage{booktabs}
\usepackage{algorithm}
\usepackage{algorithmic}
\usepackage[switch]{lineno}
\usepackage{subcaption}  
\usepackage{amssymb}
\usepackage{tabularx} 


\urlstyle{same}



\newtheorem{theorem}{Theorem}





\pdfinfo{
/TemplateVersion (IJCAI.2025.0)
}

\title{Approximated Behavioral Metric-based State Projection \\for Federated Reinforcement Learning}

\author{
Zengxia Guo$^{1,2}$
\and
Bohui An$^{1,2}$\and
Zhongqi Lu$^{1,2}$\thanks{Corresponding author.}\\
\affiliations
$^1$College of Artificial Intelligence, China University of Petroleum-Beijing, China\\
$^2$Hainan Institute of China University of Petroleum (Beijing),  Sanya, Hainan, China\\
\emails
2024211272@student.cup.edu.cn,
2024211271@student.cup.edu.cn,
zhongqi@cup.edu.cn
}

\begin{document}

\maketitle

\begin{abstract}
Federated reinforcement learning (FRL) methods usually share the encrypted local state or policy information and help each client to learn from others while preserving everyone's privacy.
In this work, we propose that sharing the approximated behavior metric-based state projection function is a promising way to enhance the performance of FRL and concurrently provides an effective protection of sensitive information.
We introduce FedRAG, a FRL framework to learn a computationally practical projection function of states for each client and aggregating the parameters of projection functions at a central server. 
The FedRAG approach shares no sensitive task-specific information, yet provides information gain for each client.
We conduct extensive experiments on the DeepMind Control Suite to demonstrate insightful results.
\end{abstract}

\section{Introduction}

In recent years, federated learning has emerged as a new approach to enable data owners to collaboratively train each one's improved local model with the help of the privacy preserved information from others~\cite{10.1145/3298981,federatedlearning,li2020federated2,wei2020federated,lyu2020threatsfederatedlearningsurvey}. Federated reinforcement learning (FRL) applies federated learning principles to reinforcement learning~\cite{zhuo2019federated}. In FRL, multiple clients, each with their own local environments, collaborate to learn a collective optimal policy~\cite{qi2021federated}.

Aggregating knowledge from clients in non-identical environments allows FRL to explore a huge state-action space, enhance sample efficiency and accelerate the learning process~\cite{wang2020optimizing}. However, FRL faces unique challenges primarily due to the different local environments and diverse data distributions among clients. In FRL, clients may experience very different states and rewards in their own environment, resulting in diverse data distribution. This diversity may lead to significant differences in the learning model, making it difficult for clients to converge to a robust common policy~\cite{zhao2018federated}.  Additionally, FRL must ensure that sensitive information remains protected from exposure to other clients or the central server~\cite{zhu2019deepleakagegradients,DBLP:journals/corr/abs-2103-06473}.

Previous researches found that learning representation based behavioral metric can significantly accelerate the reinforcement learning process and enhance the generality of policy~\cite{DBLP:journals/corr/abs-2006-10742,agarwal2021contrastive,kemertas2021towards}. This method involves learning a state projection function by evaluating the behavioral similarities between states, which are measured in terms of rewards and state transition probabilities. The state projection function is valuable to the learning process, yet it does not reveal any sensitive task-specific information. In the FRL settings, clients would not directly share the rewards and state information because of the privacy issues. Therefore, sharing the parameters of the state projection function could be a promising research direction for FRL.

\begin{figure}[t]
\centering
\includegraphics[width=0.5\textwidth]{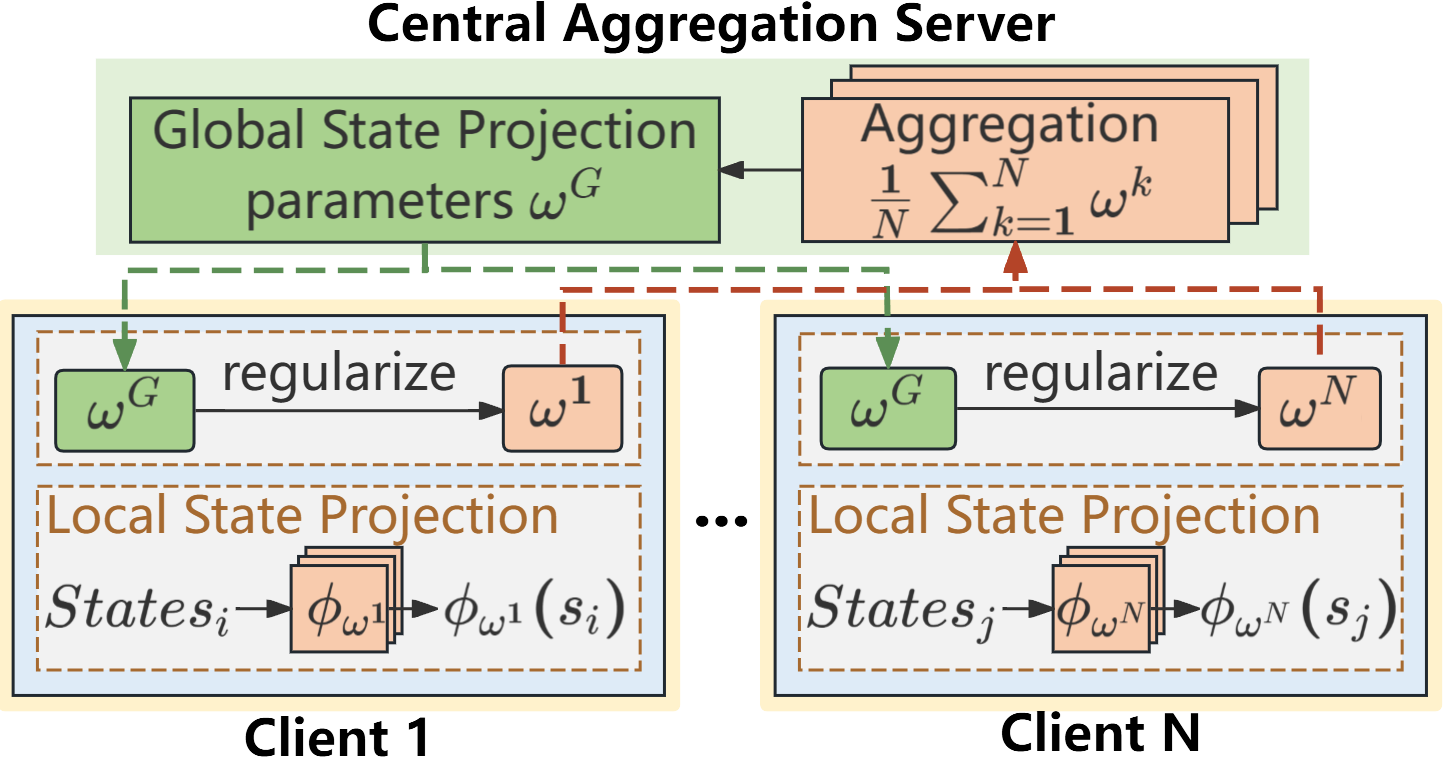} 
\caption{Framework of FedRAG. Periodically, the local state projection function parameters \(\omega^k\) are synchronized to a central server. Then the central server distributes the averaged parameters to the clients. For each client, a regularization term is incorporated to ensure that the client's local state projection parameters follow the global updates.}
\vspace{-1.5em}  
\label{fig:fig1}
\end{figure}

In this work, we propose the Federated Reinforcement Learning with Reducing Approximation Gap (FedRAG), a novel FRL framework to share parameters of state projection functions and to learn a local behavioral metric-based state projection function for each client.
We detail FedRAG's network architecture in Figure~\ref{fig:fig1}, emphasizing how client collaboration is achieved through shared state projection functions. The global state projection function is formed by aggregating local state projection functions, each trained with behavioral metrics to capture the unique transition dynamics and rewards of its respective environment. By integrating these locally learned features, the global state projection function reflects the diverse dynamics and rewards across different environments. Periodically, each client's local state projection function is replaced with the global state projection function, while the L2 regularization is continuously applied to maintain alignment throughout the learning process. Together, these mechanisms improve local state projection function and strategies that are robust and adaptable across varied environments.

The main contributions are as follows:
\begin{itemize}
\item We propose FedRAG, a novel federated reinforcement learning framework to share the projection function of states, instead of traditionally sharing the encrypted states information. Subsequent analysis show that our method is beneficial to privacy-preserving as a side-effect.
\item Under the FedRAG framework, we introduce a behavioral metric-based state projection function and develop its practical approximation algorithm in Federated Learning settings. Empirical results demonstrate our method is effective.
\end{itemize}

\section{Related Work}

\paragraph{Federated Learning.} Federated Learning (FL) was first introduced in FedAvg~\cite{mcmahan2017communication}, where training data remains distributed across mobile devices, and a shared model is learned by aggregating locally computed updates through iterative model averaging. Subsequently, FedProx~\cite{li2020federated} addresses system heterogeneity and statistical variability in federated networks. It incorporates a proximal term into local optimizations, allowing for variable computational efforts across devices, which helps stabilize diverse local updates.
To accommodate the inherent heterogeneity in FL, Per-FedAvg~\cite{fallah2020personalized} was developed as a personalized approach. This method adapts Model-Agnostic Meta-Learning (MAML) to provide a suitable initial model that quickly adapts to each user's local data after training. Another innovation, pFedMe~\cite{t2020personalized} tackles the statistical diversity among clients by utilizing Moreau envelopes as client-specific regularized loss functions, effectively decoupling personalized model optimization from global model learning.
\nocite{reddi2020adaptive}

\paragraph{Federated Representation Learning.} Recently, federated representation learning, which focuses on training models to extract effective feature representations directly from raw data, has become increasingly popular.
LG-FedAvg~\cite{liang2020think} optimizes for compact local representations on each device alongside a global model spanning all devices.
FedRep~\cite{collins2021exploiting} learns a shared data representation among clients while maintaining unique local heads to enhance each client's model quality.
Model Contrastive Learning (MOON)~\cite{li2021model} improves local update consistency by maximizing alignment between representations learned from local and global models.
Additionally, the Federated Prototype-wise Contrastive Learning (FedPCL) approach~\cite{tan2022federated} was introduced, leveraging pre-trained neural networks as backbones to enable knowledge sharing through class prototypes while constructing client-specific representations using prototype-wise contrastive learning.
FedCA~\cite{zhang2023federated} aggregates representations from each client, aligning them with a base model trained on public data to mitigate inconsistencies and misalignment in the representation space across clients.
TurboSVM-FL~\cite{wang2024turbosvm} accelerates convergence in federated classification tasks by employing support vector machines for selective aggregation and applying max-margin spread-out regularization on class embeddings.
Despite these advancements, research in federated representation learning specific to reinforcement learning remains limited.
\paragraph{Federated Reinforcement Learning.}
Federated Reinforcement Learning enables clients to collaboratively learn a unified policy while preserving privacy by avoiding the exchange of raw trajectories. FedPG-BR~\cite{fan2021fault} addresses convergence and fault tolerance against adversarial attacks or random failures in homogeneous environments using variance-reduced policy gradients. However, it does not tackle the challenges of heterogeneous environments, which is the focus of our work.
To address environmental heterogeneity, Jin \textit{et al.}~\shortcite{jin2022federated} introduced QAvg and PAvg algorithms, employing value function-based and policy gradient methods. They further proposed personalized policies that embed environment-specific state transitions into low-dimensional vectors, improving generalization and efficiency.
Similarly, Tang \textit{et al.}~\shortcite{tang2022fesac} developed FeSAC, based on the soft actor-critic framework, which isolates local policies from global integration and employs trend models to adapt to regional disparities.
Building on these advancements, our work focuses on learning a federated behavioral metric-based state projection function to effectively generalize across diverse environments. 
\nocite{fan2023fedhql}

\paragraph{Behavioral Metrics-based Representation Learning.}
Behavioral metric-based representation learning aims to create an embedding space that preserves behavioral similarities based on transitions and immediate rewards.
Bisimulation metrics~\cite{ferns2011bisimulation} measure state behavioral similarities in probabilistic transition systems for continuous state-space Markov Decision Processes (MDPs). On-policy bisimulation metrics~\cite{castro2020scalable} focus on behaviors specific to a given policy \(\pi\), incorporating a reward difference term and the Wasserstein distance between dynamics models.
To address the computational challenges associated with the Wasserstein distance, the MICo distance~\cite{castro2021mico} was developed to compare dynamics model distributions by measuring the distance between sampled subsequent states. The Conservative State-Action Discrepancy~\cite{article} separates the learning of the RL policy from the metric itself, focusing on the most divergent reward outcomes between states taking the same actions to define similarity in the embedding space.
Chen and Pan~\shortcite{chen2022learning} propose the Reducing Approximation Gap distance to recursively measure expected states over dynamics models, focusing on sampling from the policy \(\pi\) rather than the dynamics models. This approach reduces approximation errors and is particularly effective for representation learning. In our work, we apply approximation behavior metric-based representation learning to develop local state projection functions, capturing task-relevant behavioral similarities within each client's environment. Federated Learning then allows for sharing the parameters of these local projection function, enabling clients to benefit from generalized state representations across diverse environments.

\section{Preliminaries}
This section highlights the Federated Soft Actor-Critic (FeSAC) variant central to our research.
Soft Actor-Critic (SAC) is an off-policy actor-critic algorithm based on the maximum entropy reinforcement learning framework~\cite{haarnoja2018soft}. It aims to maximize cumulative future rewards and entropy to enhance robustness and exploration while preventing convergence to suboptimal policies.
FeSAC extends SAC to a federated setting, enabling collaborative training among clients operating in diverse environments while ensuring data privacy.
The global environment \( E = \{ E^1, E^2, \ldots, E^N \} \) is composed of \( N \) distinct local environments, and each client \( k \) operates within its own unique local environment \( E^k \). The transition probabilities differ across local environments, i.e., 
\(P(s^i_{t+1} | s^i_{t}, a) \neq P(s^j_{t+1} | s^j_{t}, a)\), \(i \neq j\).

As our study focuses on applying approximated behavioral metric-based representation learning to FRL, we introduce the state projection function when discussing FeSAC.
In the scope of representation learning for deep RL, a state projection function \(\phi_{\omega^k}\) maps a high-dimensional state to low-dimensional vector, from which the policy \( \pi_{\psi^k}(a|\phi_{\omega^k}(s))\) is learned.
We configure all critic networks, target critic networks, and action networks to take the state representation \(  \phi_ {\omega^k} (s) \) as input instead of the raw state $s$.

Unlike traditional FRL, the objective of FeSAC is to derive a set of maximum entropy policies that are specifically optimized for their respective local environments. The target policy \(\widetilde{\pi}^k\) for client \(k\) in its local environment \(E^k\) is 
\vspace{-0.45em}
\begin{align}
\widetilde{\pi}^k = & \arg\max_{\pi^k} \sum_{t=0}^{T} \mathbb{E}_{(s_{t}^k, a_{t}^k) \sim \tau_{\pi^k}} \Big[\, \gamma^t r(s^k_t, a^k_t) \nonumber \\
& \quad + \alpha^k \mathcal{H}\big(\pi^k(\cdot \mid \phi_{\omega^k}(s_{t}^k))\big) \Big],
\end{align}
where \(s_t^k\) and  \(a_t^k\) represent the state and action made by client \(k\) in its local environment \(E^k\) at time \(t\); \(\tau_{\pi^k}\) refers to the trajectory generated by the policy \(\pi^k\) of client \(k\), which encompasses the sequence of states and actions over time; \(\gamma^k\) is the discount rate; \(\alpha^k\) is the entropy regularization coefficient used to control the importance of entropy; \(\mathcal{H}(\pi^k(\cdot | \phi_{\omega^k}{(s_{t}^k)}))=\mathbb{E}[-log\pi^k(\cdot | \phi_{\omega^k}{(s_{t}^k}))]\) represents the entropy of the policy.

To evaluate the impact of the policy on local environments, the soft state value is defined as:
\begin{align}
V(s_t^k) = & \underset{ a_t^k \sim \pi_{\psi^k}}{\mathbb{E}} \left[ Q_{\theta^k}(\phi_{\omega^k}(s^k_t), a^k_t) \right. \nonumber \\
& \quad - \left. \alpha^k \log \pi_{\psi^k}(a^k_t|\phi_{\omega^k}(s^k_t)) \right],
\end{align}
where \(Q_{\theta^k}\) denote the local critic Q network for client \(k\).

Each client adjusts its local Q-network to approximate the global Q-network, thus leveraging global knowledge while retaining its own characteristics:
\vspace{-1em}
\begin{align}
L_{Q}(\theta^{k}) = & \mathbb{E}_{(s_t^k, a_t^k, r_t^k, s_{t+1}^k) \sim \mathcal{D}^k} \Bigg[ Q_{\theta^k}\big(\phi_{\omega^k}(s_t^k), a_t^k\big) \notag \\
& \quad - \left( r_t^k + \gamma V_{\bar{\theta}}(s_{t+1}^k) \right) \Bigg]^2,
\label{eq:critic_loss_fesac}
\end{align}
where \(V_{\bar{\theta}}\) denotes use the target critic Q networks to calculate the soft state value.

In FeSAC, the target critic Q network refers to the global critic Q network, which is broadcasted by the server to all clients. The global critic Q network \(Q_{\bar{\theta}}\) is formed by aggregating the local critic Q networks of each client through soft updates, considering the reward differences of state-action pairs in each client's environment to obtain a value estimation in a global context:
\begin{equation}
Q_{\bar{\theta}} \leftarrow \epsilon Q_{\theta^k} + (1 - \epsilon) Q_{\bar{\theta}}, \quad k \in \{1, 2, \ldots, N\},
\end{equation}
where \(\epsilon\) is the aggregation factor. 

The updated local Q-network then guides the update of the local policy, which keeps the local variability as well as learning the implicit trend of the global environment:
\begin{align}
\label{eq:actor_loss_fesac}
& L_{\pi}(\psi^k) = \underset{s_t^k \sim \mathcal{D}^k}{\mathbb{E}} \bigg [ \underset{a_t^k \sim \pi_{\psi^k}(\cdot|\phi_{\omega^k}(s_t^k))}{\mathbb{E}} \left[ \right. \nonumber \\ &  \left.
 \alpha^k \log \pi_{\psi^k}(a_t^k \mid \phi_{\omega^k}(s_t^k))
 - Q_{\theta^k}(\phi_{\omega^k}(s_t^k), a_t^k) \right] \bigg ].
\end{align}

The temperature parameter \(\alpha^k\) is adapted to balance exploration and exploitation by controlling the relative importance of the entropy term in the policy's objective. The update objective for \(\alpha^k\) in client \(k\) is as follows~\cite{haarnoja2018soft2}:
\begin{align}
\label{eq:alpha_loss}
L_{\alpha}(\alpha^k) = & \underset{s^k_t\sim \mathcal{D}^k}{\mathbb{E}}\left[\mathbb{E}_{a^k_t \sim \pi_{\psi^k}(\cdot|\phi_{\omega^k}(s_t^k))} \right. \nonumber \\ & \left. [\alpha^k \log \pi_{\psi^k}(a^k_t | \phi_{\omega^k}(s_t^k)) - \alpha^k \bar{\mathcal{H}} ]\right],
\end{align}
where \(\bar{\mathcal{H}}\) is a target entropy level to tune the degree of exploration and \(\bar{\mathcal{H}}=-|\mathcal{A}|\).

\section{Methodology}
In this section, we present the problem formulation for federated reinforcement learning with heterogeneous environments, introduce the approximated behavioral metric-based state projection function, propose the FedRAG framework and provide a theoretical analysis of its privacy preserving.
\subsection{Problem Formulation}
In federated reinforcement learning with heterogeneous environments, \(N\) clients each interact with their own unique local environment \(E^k\), each modeled as a unique Markov Decision Process (MDP): \(\{ S^k,A,R^k,P^k,\gamma\}\). Each client has a unique state space \(S^k\), reward function \(R^k(s,a)\), and state transition dynamics $P^k(s’|s,a)$, reflecting the diversity of their environments, while sharing a common action space \(A\) and discount factor \(\gamma\).
A central server facilitates collaboration by periodically aggregating and distributing shared model parameters, specifically the state projection function 
\(\phi_{\omega}\) in FedRAG. This function maps local states to a shared embedding space, enabling clients to benefit from collective learning while preserving privacy. FedRAG optimizes local policies \(\pi^k \left (a|\phi_{\omega^k}\left (s\right )\right )\) by sharing the parameters of \(\phi_\omega\), aiming to maximize cumulative reward and entropy:

\vspace{-0.5em}
\begin{align}
\widetilde{\pi}^k = & \arg\max_{\pi^k} \frac{1}{N} \sum_{k=1}^{N} \Bigg\{  
    \sum_{t=0}^{\infty} \mathbb{E}_{(s_{t}^k, a_{t}^k) \sim \tau_{\pi^k}} \Bigg[ \gamma^t R^k\left (s^k_t, a^k_t\right ) \nonumber\\
    &\quad + \alpha^k \mathcal{H}\left (\pi_{\psi^k}\left (\cdot \mid \phi_{\omega^k}\left (s_{t}^k\right)\right )\right) \Bigg] 
\Bigg\},
\end{align}
where \(a^k_t\sim \pi^k(\cdotp |\phi_{\omega^k}(s^k_t))\), \(s^k_{t+1}\sim P^k(\cdotp |s^k_t,a^k_t)\). To preserve data privacy, only the parameters of the state projection function \(\omega\) are shared between clients and the server, while raw states, rewards, and transition dynamics remain local to each client, ensuring sensitive information is not exchanged while enabling effective federated learning.

\subsection{Client RAG Distance}
In FeSAC, clients across different environments share knowledge by aligning their local Q networks with the global Q network, enabling optimal local policies while adapting to changes. However, in complex environments, clients may struggle to capture task-relevant information (see Section~\ref{comparison}), leading to unclear global perceptions and hindering adaptation to environmental changes.

To enhance generalization in complex environments, we introduce behavior metric-based representation learning. This approach learns robust state representations that filter out task-irrelevant background information, speeding up the learning process and improving policy generalization across diverse environments.

For each client \(k\), behavioral metric-based representation learning is to learn a local state encoding network ${\phi}_{\omega^k}:S^k\to\mathbb{R}^n$ with parameters $\omega^k$, which can be cast as a minimization problem of the loss between the distance on the embedding space, $\hat{d}({\phi }_{\omega^k }\left ( {s}_{i}^k\right ),{\phi }_{\omega^k }\left ( {s}_{j}^k\right ))$, and the corresponding behavior metric, ${d}^\pi({s}^k_{i},{s}^k_{j})$ , between any pair of states $s_i^k$ and $s_j^k$:
\vspace{-0.2em}
\begin{equation}
L_{\phi}({\omega^k}) = \mathbb{E}\left[ \left( \hat{d}(\phi_{\omega^k}(s_{i}^k), \phi_{\omega^k}(s_{j}^k)) - d^\pi(s_{i}^k, s_{j}^k) \right)^2 \right].
\label{eq:BehaviorMetric}
\end{equation}

The Reducing Approximation Gap (RAG) distance is a behavioral metric that measures the absolute difference between the reward expectations of two states and the distance between the next state expectations of dynamics models. And it is defined as follows:
\begin{align}
d^\pi(s_i^k, s_j^k) = & \left| \mathbb{E}_{a_i^k \sim \pi^k} r^{a_i^k}_{s_i^k} - \mathbb{E}_{a_j^k \sim \pi^k} r^{a_j^k}_{s_j^k} \right| \nonumber \\ & + \gamma \mathbb{E}_{a_i^k \sim \pi^k, a_j^k \sim \pi^k} d^\pi(\mathbb{E}[s^k_{i+1}], \mathbb{E}[s^k_{j+1}]),
\end{align}
where \( \mathbb{E}_{a_i^k \sim \pi^k} r^{a_i^k}_{s_i^k}\) represents the expected reward obtained by taking action \(a^k_i\) in state \(s_i^k\) under the policy \(\pi^k\) of client \(k\), $\mathbb{E}[s^k_{i+1}] = \mathbb{E}_{s^k_{i+1} \sim P^{a^k_i}_{s^k_i}}[s^k_{i+1}]$ is the expectation value of next state over the dynamics model $P({s^k_i}, {a^k_i})$.

Then the approximation of RAG relax the computationally intractable reward difference term without introducing any approximate gap, as shown below:
\begin{align}
& d^\pi (s^k_i, s^k_j)\nonumber \\= &\ \sqrt{ \mathbb{E}_{a^k_i \sim \pi^k, a^k_j \sim \pi^k} \left[ \left( r^{a_i^k}_{s_i^k} - r^{a_j^k}_{s_j^k} \right)^2 \right]  - \text{Var}[r_{s^k_i}] - \text{Var}[r_{s^k_j}] } \nonumber\\
+ &\ \gamma \mathbb{E}_{a^k_i \sim \pi^k, a_j^k \sim \pi^k} d^\pi \left( \mathbb{E}[s^k_{i+1}], \mathbb{E}[s^k_{j+1}] \right).
\label{eq:ApproximateRAP}
\end{align}

Since the reward variance $\text{Var}[r_{s^k_i}]$ is computationally intractable, we can learn a neural network approximator to estimate it by assuming that the reward $r_{s^k}$ on state $s^k$ is Gaussian distributed. Let $\hat{R}_{\xi^k}(s^k) = \{\hat{\mu}(r_{s^k}), \hat{\sigma}(r_{s^k})\}$ be the learned reward function approximation parameterized by $\xi^k$, which outputs a Gaussian distribution. The loss function is:
\begin{equation}
\label{eq:reward_loss}
L_{\hat{R}}(\xi^k) = \mathbb{E}_{(s^k, r^k) \sim \mathcal{D}^k} \left[ \left (\frac{r^k - \hat{\mu}(r_{s^k})}{2\hat{\sigma}(r_{s^k})} \right )^2\right],
\end{equation}
where $\hat{\mu}$ and $\hat{\sigma}$ are the mean and the standard deviation, respectively.

Similarly, in order to estimate the expected next states  $\mathbb{E}[s^k_{i+1}]$, we learn a dynamics model $\hat{P}_{\eta^k}(\phi_{\omega^k}(s), a) = \{\hat{\mu}(\hat{P}_{\phi_{\omega^k}(s)}^a), \hat{\sigma}(\hat{P}_{\phi_{\omega^k}(s)}^a)\}$ for each client, which outputs a Gaussian distribution over the next state embedding: 
\begin{equation}
\label{eq:dynamics_loss}
L_{\hat{P}}(\eta^k) = \underset{(s^k_i, a^k_i, s^k_{i+1}) \sim \mathcal{D}^k}{\mathbb{E}} \left[ \left(\frac{ \phi_{\omega^k}(s^k_{i+1}) \!-\! \hat{\mu}\left (\hat{P}^{a^k_i}_{\phi_{\omega^k}({s^k_i})} \right )}{2 \hat{\sigma}\left (\hat{P}^{a^k_i}_{ \phi_{\omega^k}(s^k_i)}\right)}  \right)^2\right].
\end{equation}

Based on the above approximation, the RAG loss for each client can be defined as: 
\begin{equation}
\begin{split}
& L_{\text{RAG}} (\phi_{\omega^k}) =  \mathbb{E}_{\mathcal{D}^k} \\ & \bigg[ \left( \hat{d}\left(\phi_{\omega^k}(s^k_i), \phi_{\omega^k}(s^k_j)\right) \!-\! \gamma \hat{d}(\hat{\mu}(\hat{P}^{a^k_i}_{ \phi_{\omega^k}(s^k_i)}), \hat{\mu}(\hat{P}^{a^k_j}_{ \phi_{\omega^k}(s^k_j)})) \right)^2 \\
 & -  \left(\Big| r^{a^k_i}_{s^k_i} - r^{a^k_j}_{s^k_j} \Big|^2 - (\hat{\sigma}(r_{s^k_i}))^2 -(\hat{\sigma}(r_{s^k_j}))^2 \right)\bigg]^2,
\end{split}
\label{eq:RAG_Loss}
\end{equation}
where \(\mathcal{D}^k\) represents the replay buffer or the set of data collected from environment \(E^k\) by the RL algorithm, e.g. SAC. 

Considering that the behavior metric has non-zero self-distance, the distance on the embedding space adopts the approximate form proposed in MICo~\cite{castro2021mico}, which produces a non-zero self-distance and helps in maintaining proximity between similar states rather than pushing them apart:
\begin{equation}
\hat{d}(\phi(s^k_i), \phi(s^k_j)) = \|\phi(s^k_i)\|^2 + \|\phi(s^k_j)\|^2 + K \varphi(\phi(s^k_i), \phi(s^k_j)),
\end{equation}
while $\varphi$ is absolute angle distance and $K$ is a hyper-parameter. 

\subsection{FedRAG Framework}
\begin{algorithm}[t]
\caption{FedRAG algorithm}
\label{alg:algorithm}
\begin{algorithmic}[1]
\STATE Initialize local networks $\phi_{\omega^k}$, $\phi_{\bar{\omega}^k}$, $Q_{\theta^k}$, $Q_{\bar{\theta}^k}$, $\pi_{\psi^k}$, $\hat{R}_{\xi^k}$, $\hat{P}_{\eta^k}$ for each client $k \in \{1, 2, \dots, N\}$, and global network $\phi_{\omega^G}$ at the server.
\STATE Synchronize local and global parameters: $\omega^k, \bar{\omega}^k \leftarrow \omega^G$ for each client $k$.
\STATE Initialize empty replay memory $\mathcal{D}^k$ for each client $k$.
\WHILE{running}
    \FOR{each client $k$}
        \STATE Observe state $s_t$ from local environment $E^k$, sample action $a_t \sim \pi(\cdotp|\phi_{\omega^k}(s_t))$ and execute.
        \STATE Receive reward $r_t\leftarrow R(s_t,a_t)$ and transition to next state $s_{t+1} \sim P(\cdotp | s_t, a_t)$.
        \STATE Store transition $(s_t, a_t, r_t, s_{t+1})$ in $\mathcal{D}^k$.
        \STATE Update local networks $Q_{\theta^k}$, $\pi_{\psi^k}$, $\alpha^k$, $\hat{P}_{\eta^k}$, $\hat{R}_{\xi^k}$, $\phi_{\omega^k}$ via gradient descent using Eq.~\ref{eq:critic_loss_fesac},\ref{eq:actor_loss_fesac},\ref{eq:alpha_loss},\ref{eq:dynamics_loss},\ref{eq:reward_loss},\ref{eq:FedRAG_Loss}
        \STATE Softly update target networks:
        $\bar{\theta}^k \leftarrow \tau_Q\theta^k + (1-\tau_Q)\bar{\theta}^k$, 
        $\bar{\omega}^k \leftarrow \tau_{\phi}\omega^k + (1-\tau_{\phi})\bar{\omega}^k$.
        \IF{running $n$ iterations}
            \STATE Upload $\omega^k$ to federated center node.
        \ENDIF
    \ENDFOR
    \IF{in federated center node}
        \STATE Aggregate global parameters: $\omega^G \leftarrow \frac{1}{N}\sum_{k=1}^N \omega^k$.
        \STATE Broadcast updated global parameters: $\omega^k, \bar{\omega}^k \leftarrow \omega^G$ for all clients.
    \ENDIF
\ENDWHILE
\end{algorithmic}
\end{algorithm}
\noindent Under the federated learning framework, we share the parameter \(\omega\) of the state projection function \(\phi_\omega\). The FedRAG framework operates with multiple clients and a federated central node. Each client \(k\) generates local parameters \(\omega^k\) for the state projection function and updates policy networks based on their local environment. The federated central node collects these local parameters \(\omega^k\) from all clients, aggregates them into a global distribution, and then distributes the updated global parameters back to the clients. Each client uses the state projection $ \phi_{\omega^k}(s)$ as input for both the actor and critic networks. We assume that global $\omega$ follows a Gaussian distribution,with each client learning only a portion of the overall distribution. Therefore, we add a Gaussian regularization term after the RAG regression function Eq.~\ref{eq:RAG_Loss}, leading to the new loss formulation:
\begin{align}
&L_{\text{FedRAG}} (\phi_{\omega^k}) = \mathbb{E}_{\mathcal{D}^k}\nonumber\\& \bigg[ \left( \hat{d}\left (\phi_{\omega^k}(s^k_i), \phi_{\omega^k}(s^k_j)\right )\!-\!\gamma \hat{d} (\hat{\mu} (\hat{P}^{a^k_i}_{ \phi_{\omega^k}(s^k_i)}), \hat{\mu}(\hat{P}^{a^k_j}_{ \phi_{\omega^k}(s^k_j)}) ) \right)^2 \nonumber\\
 \!-\! &\ \left(\Big| r^{a^k_i}_{s^k_i}\!-\!r^{a^k_j}_{s^k_j} \Big|^2\!-\!(\hat{\sigma}(r_{s^k_i}))^2\!-\!(\hat{\sigma}(r_{s^k_j}))^2 \right)\bigg]^2\!+\!\frac{\lambda}{2}\| \omega^k\!-\!\omega^G \|_2^2,
\label{eq:FedRAG_Loss}
\end{align}
where $\omega^G$ represents the expectation of the global Gaussian distribution.
The regularization term helps reduce environmental heterogeneity, thereby enhancing collaborative learning, as demonstrated in Section~\ref{heter}.

The proposed FedRAG is detailed in Algorithm~\ref{alg:algorithm}. During the FL process, we upload $\omega^k$ to the server periodically. According to the central limit theorem, we approximate the global Gaussian distribution by aggregating the mean of all local $\omega^k$ at the server. 
The server then distributes the results to each client, aligning local learning with the global distribution.
By averaging local state projection function parameters, FedRAG integrates the specialized features learned in each client’s environment. Each client can maintain its own local training advantages while incorporating the global nature, and perform better when dealing with data outside of its own. 

\subsection{Effectiveness of Anti-attack}
Note that the data we aim to protect is not directly uploaded, potentially reducing the need for additional privacy techniques such as differential privacy or homomorphic encryption. Below, we analyze the privacy-preserving properties of our approach.
\nocite{truex2019hybrid}
\nocite{mothukuri2021survey}
\nocite{li2021survey}
\nocite{abadi2016deep}
\nocite{geyer2017differentially}
\nocite{mohassel2018aby3}
\nocite{zhang2020batchcrypt}
\nocite{fang2021privacy}
\nocite{park2022privacy}

One of the major issues in federated learning is preserving privacy. In our analysis, we consider the existence of semi-honest adversaries. The adversaries may launch privacy attacks to snoop on the training data of other participants by analyzing periodic updates (e.g., gradients) of the joint model during training~\cite{zhu2019deepleakagegradients}. Such kind of attacks is referred to as Bayesian inference attack~\cite{zhang2022no}.

A Bayesian inference attack is an optimization process  that aims to infer the private variable $D_{k}$  to best fit client $k$ protected exposed information ${W}_{k}^{S}$ as 
\begin{equation}
\begin{split}
d^{*} &= \arg \max_{d}{\log({f}_{D_{k}|{W}_{k}^{S}}(d|w))} \\
&= \arg \max_{d}{\log(\frac{{f}_{{W}_{k}^{S}|D_{k}}(w|d){f}_{D_{k}}(d)}{{f}_{{W}_{k}^{S}}(w)})} \\
&= \arg \max_{d}[\log{f}_{{W}_{k}^{S}|D_{k}}(w|d)+\log{f}_{D_{k}(d)}]
\end{split}
\label{eq:Bayesian_inference_attack}
\end{equation}
where ${f}_{D_{k}|{W}_{k}^{S}}(d|w)$ is the posterior of $D_{k}$ given the protected variable ${W}_{k}^{S}$. According to Bayes's theorem, maximizing the log-posterior ${f}_{D_{k}|{W}_{k}^{S}}(d|w)$ on $D_{k}$  involves maximizing summation of $\log{({f}_{{W}_{k}^{S}|D_{k}}(d|w))}$ and $\log{({f}_{D_{k}}({d}))}$. The former one aims to find $D_{k}$ to best match ${W}_{k}^{S}$, and the latter one aims to make the prior of $D_{k}$ more significant. The learned conditional distribution ${f}_{D_{k}|{W}_{k}^{S}}$ from the Bayesian inference attack reflects the dependency between ${W}_{k}^{S}$ and $D_{k}$, which determines the amount of information that adversaries may infer about $D_{k}$ after observing ${W}_{k}^{S}$. However, in our approach, the parameter $\omega$ that we participate in federated learning is related to the representation function $\phi$ of the state. From the loss $L_{FedRAG} (\phi_{\omega})$ in Equation~\ref{eq:FedRAG_Loss}, we can also see that $\omega$ is only related to the mapped state and reward, and has nothing to do with our private data state. Therefore, our proposed FedRAG protects the privacy of local state information to a certain extent.

\section{Experiment}
\subsection{Experimental Settings}
In this section, we evaluate the effectiveness and generalization of FedRAG using DeepMind Control Suite (DMC). The DMC is a benchmark for control tasks in continuous action spaces with visual input~\cite{tassa2018deepmind}. 
We simulated different environments by modifying key physical parameters for several tasks: pole length (cartpole-swing), torso length (cheetah-run), finger distal length (finger-spin), and torso length (walker-walk). 

As described in the previous section, each client projects state observation to the embedding space by using the approximated behavioral metric-based local state projection network, and updates local SAC network for policy evaluation and improvement. 
We perform experiments on 2 settings: 1) \textbf{Local}: clients can only interact and update local network in their own different environments without information sharing; 2) \textbf{Federated}: clients interact with their respective environments, update local network with information sharing according to federated methods.

In our study, we render 84×84 pixels and stack 3 frames as observation at each time step. We set an episode to consist of 125 environment steps, training over a total of 4000 episodes, which equates to 500,000 steps. For each setting, we evaluate the performance of each clients in both the same and other environments every 16 local update episodes. In the federated learning scenario, every 4 episodes, clients upload their local parameters, which the server then aggregates and redistributes as global parameters.

\subsection{FedRAG vs. Baseline Performance Comparison}
\label{comparison}  
\begin{figure}[h]
    \vspace{-1em}
    \centering
    \begin{subfigure}{0.48\textwidth}
        \centering
        \includegraphics[width=\linewidth]{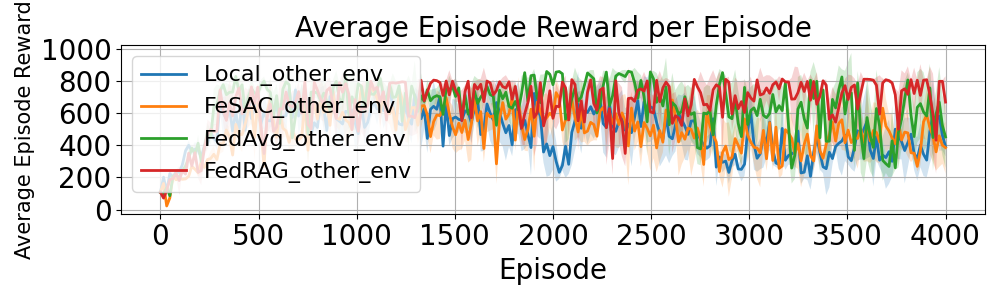}  
    \end{subfigure}
    \vspace{-0.8em}
    \begin{subfigure}{0.48\textwidth}
        \centering
        \includegraphics[width=\linewidth]{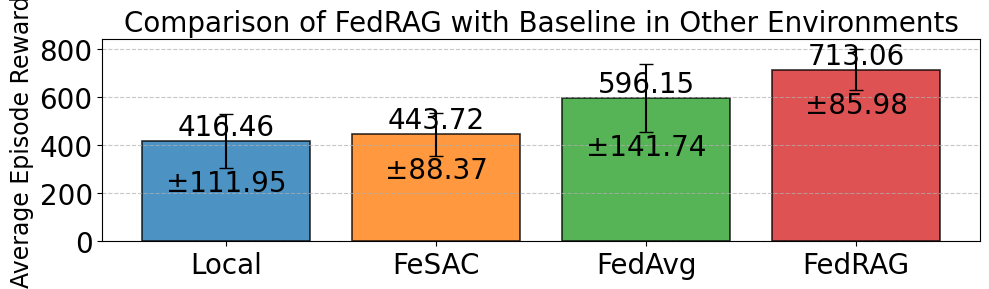}  
    \end{subfigure}
    \caption{Comparison of FedRAG with Baseline in other environments.}
    \vspace{-0.5em}  
    \label{fig:fedrag_performance3}
\end{figure}
\noindent As illustrated in Figure~\ref{fig:fedrag_performance3}, we compared our proposed FedRAG method (\(\lambda=0.001\)) with FedAvg (equivalent to FedRAG with \(\lambda=0\)), FeSAC, and Local methods in the CartPole task with varying pole lengths. We assessed the average episode reward and standard deviation achieved by the clients in other environments. The results show that clients in the Local group, trained exclusively in their own environment without federated learning, struggled to adapt to other environments, resulting in the lowest performance. FeSAC had limited effectiveness in capturing task-relevant information in complex states, leading to only modest performance improvements. In contrast, FedRAG outperformed FedAvg by effectively integrating the global state projection function during local updates, resulting in significant performance gains in other environments. 

Unlike traditional FRL methods, FedRAG transmits only lightweight state projection parameters, significantly reducing communication overhead. 
While maintaining the state projection function incurs minor computational cost, it accelerates the RL process and enhances model generalization, yielding an overall benefit.

\subsection{the Hyper-parameter of FedRAG}
\begin{figure}[h]
    \vspace{-1em}
    \centering
    \begin{subfigure}{0.48\textwidth}
        \centering
        \includegraphics[width=\linewidth]{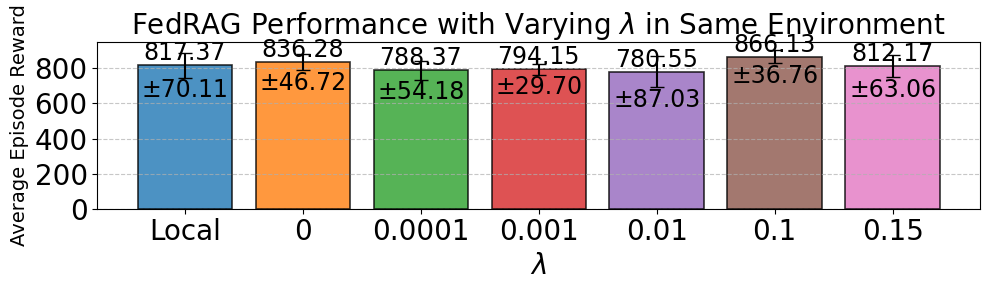}  

    \end{subfigure}
    \vspace{-1em}
    \begin{subfigure}{0.48\textwidth}
        \centering
        \includegraphics[width=\linewidth]{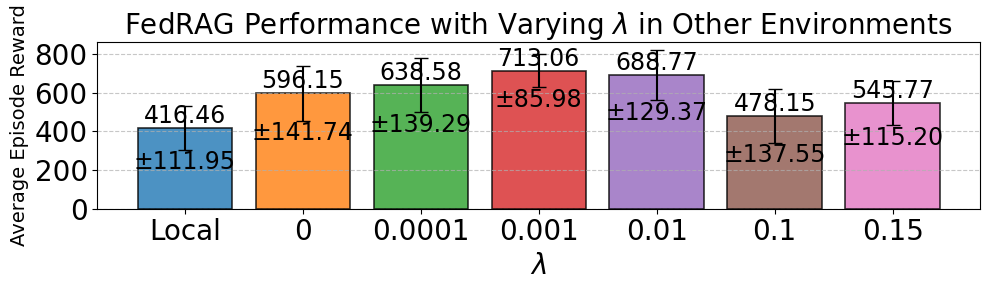}  
    \end{subfigure}
    \caption{The results of varying lambda. In the above experiment, the training data and testing data are from environments with same setting, while in the below experiment, they are come from environments with different settings.
    }
    \vspace{-0.5em}  
    \label{fig:fedrag_performance}
\end{figure}

In the local update process of FedRAG, the regularization term in Equation~\ref{eq:FedRAG_Loss} aligns the local state projection function with the global one. As shown in Figure~\ref{fig:fedrag_performance}, we evaluated the performance of the Local group and multiple FedRAG groups with varying $\lambda$ values in both same and other environments.

In other environments, increasing $\lambda$ improves the alignment between the local and global state projection functions, enhancing parameter sharing and boosting performance across other environments. However, a large $\lambda$ may keep local updates too close to their initial global state, restricting parameter updates and slowing convergence. The optimal performance was achieved at $\lambda = 0.001$. 

In the same environment, performance remained stable with minor fluctuations, highlighting the robustness of our approach. 
In FRL with heterogeneous environments, each local environment has unique state transition dynamics. 
The aggregated global state projection function integrates information from diverse environments, which can introduce conflicting or irrelevant gradients that do not benefit local performance.
While increasing $\lambda$ enhances generalization across diverse environments by leveraging shared global knowledge, it can limit local networks' ability to optimize for their specific environment. 
This trade-off reflects the balance in FL between generalization and specialization.

Overall, while performance remained stable in the same environment across all $\lambda$ values, notable improvements were observed in other environments, confirming the effectiveness of our federated approach. 

\subsection{Performance Improvement for Federated Learning}
\begin{figure}[h]
     \vspace{-0.5em}
    \centering
    \begin{subfigure}{0.48\textwidth}
    \vspace{-0.3em}
        \centering
        \includegraphics[width=\linewidth]{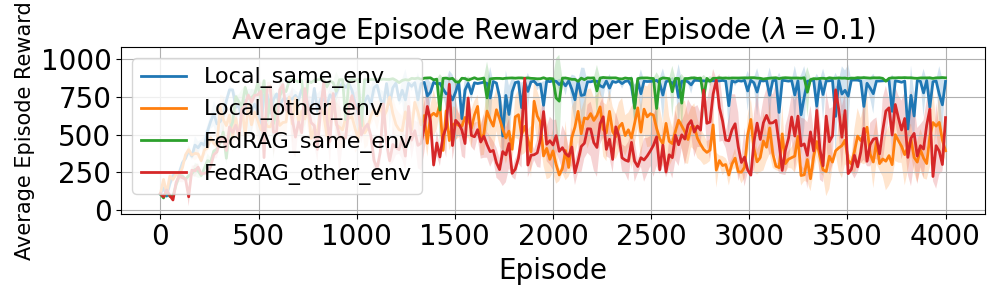}  
    \end{subfigure}
    \begin{subfigure}{0.48\textwidth}
        \centering
        \includegraphics[width=\linewidth]{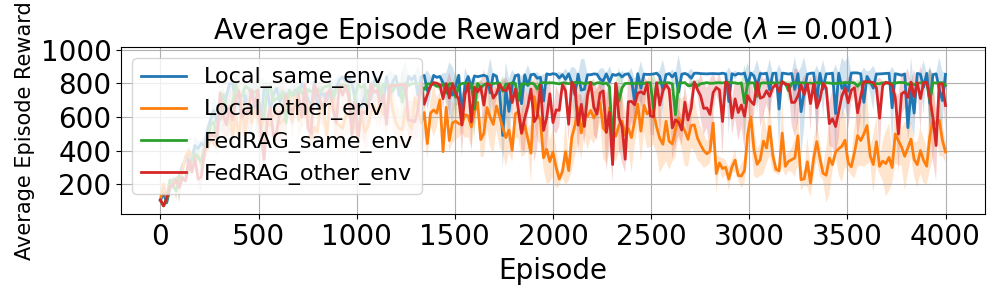}  
    \end{subfigure}
    \vspace{-2em}
    \caption{Comparison of Local and FedRAG with $\lambda=0.1/0.001$ in same or other environments.}
    \vspace{-0.5em}
    \label{fig:fedrag_performance2}
\end{figure}

In Figure~\ref{fig:fedrag_performance2}, we compare the performance of the FedRAG method ($\lambda=0.1/0.001$) with the Local approach by evaluating average episode rewards in both the same and other environments. The Local approach limits clients to their own environments, resulting in local optimal policies that poorly generalize. In contrast, FedRAG aggregates local state projection functions on a central server to create a global state projection function. By sharing this global function during local updates, clients benefit from cross-environment knowledge sharing while maintaining data privacy. With $\lambda=0.1$, FedRAG enhances local performance by leveraging shared knowledge to overcome local optima, while also improving performance in other environments. At $\lambda=0.001$, FedRAG achieves the best results in other environments with minimal loss in the same environment, demonstrating strong generalization and robustness across diverse settings.

\subsection{FedRAG Performance on Various DeepMind Control Tasks}
\begin{figure}[h]
    \centering
    \vspace{-0.8em}
    \begin{subfigure}{0.24\textwidth}
        \centering
        \includegraphics[width=\linewidth]{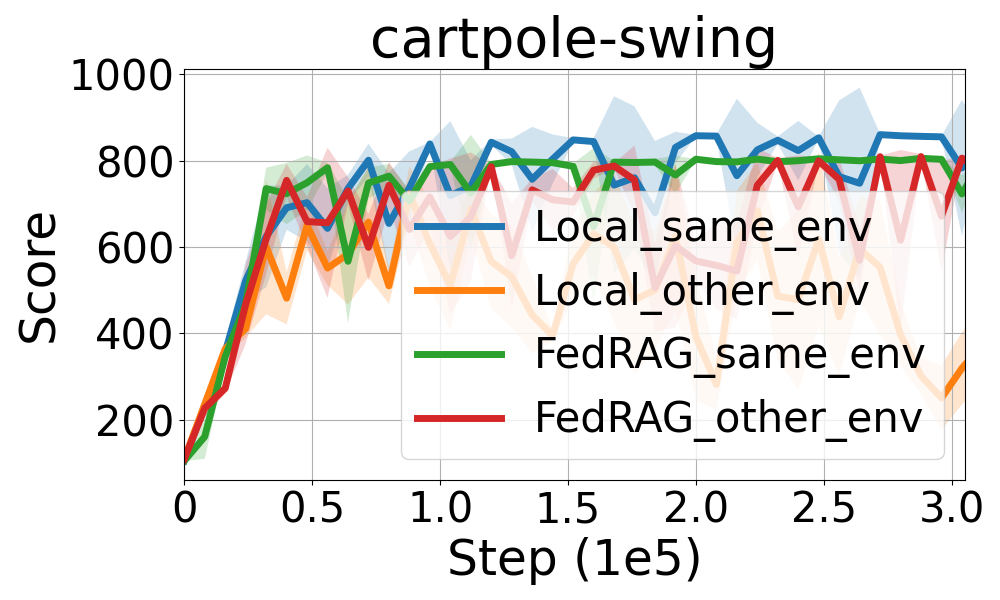}  
    \end{subfigure}
    \hspace{-0.5em}
    \begin{subfigure}{0.24\textwidth}
        \centering
        \includegraphics[width=\linewidth]{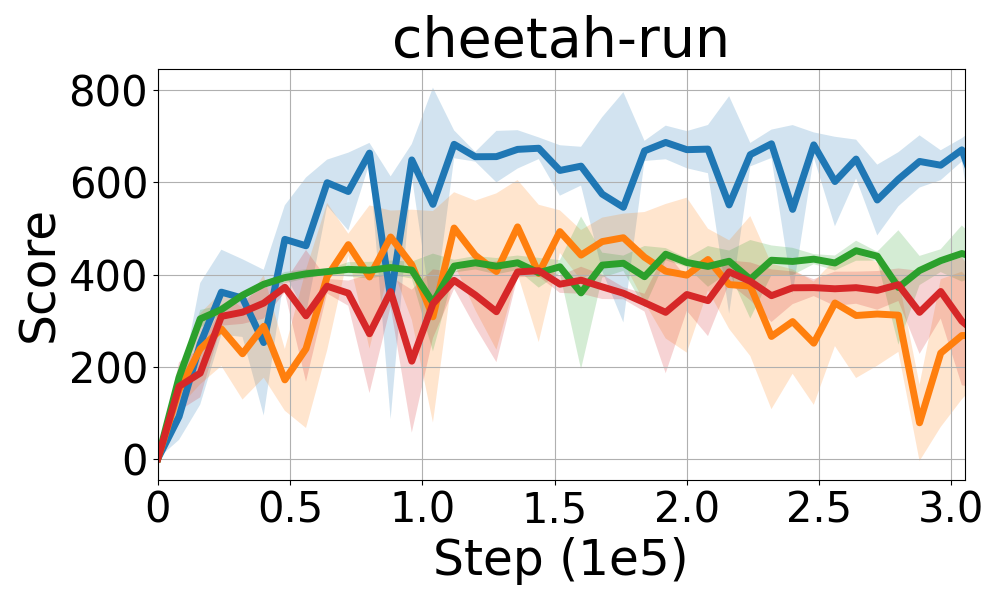}  
    \end{subfigure}
    \vspace{-0.5em}
    \begin{subfigure}{0.24\textwidth}
        \centering
        \includegraphics[width=\linewidth]{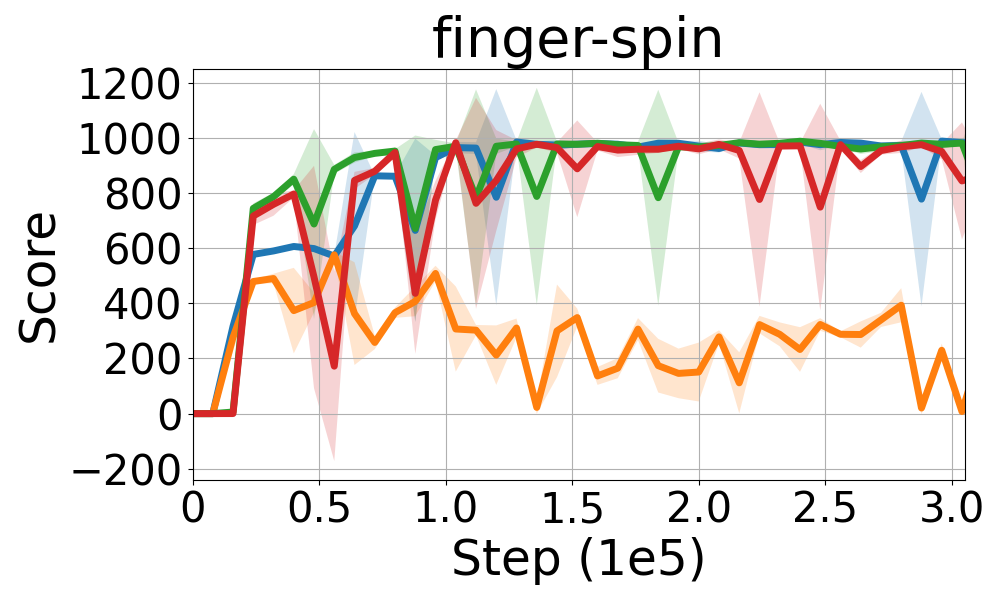}  
    \end{subfigure}
    \hspace{-0.5em}
    \begin{subfigure}{0.24\textwidth}
        \centering
        \includegraphics[width=\linewidth]{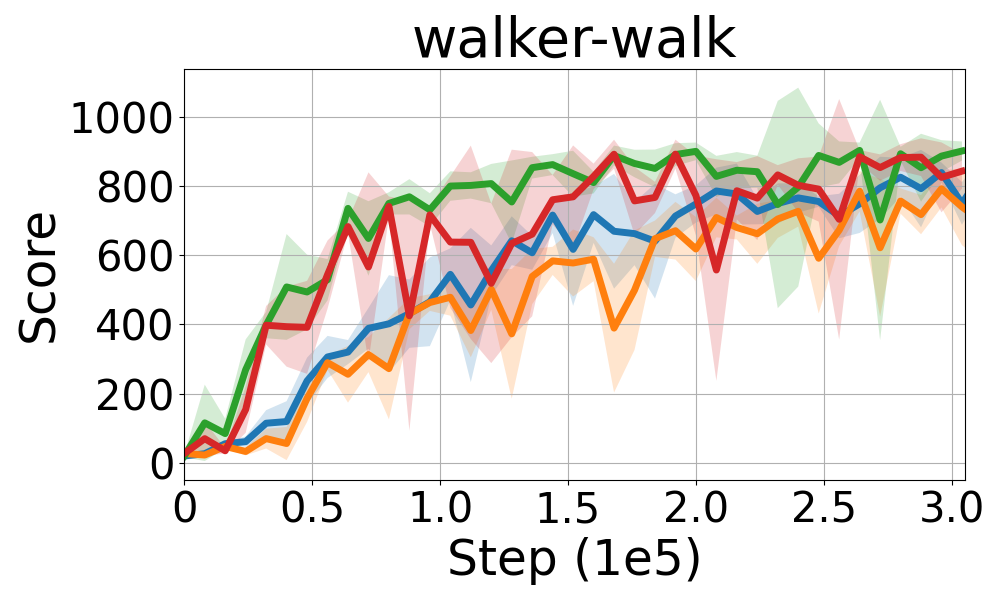}  
    \end{subfigure}
    \caption{Experimental results on various DMC tasks.}
    \vspace{-0.5em}  
    \label{fig:dmc}
\end{figure}
\noindent To evaluate the robustness and effectiveness of our method, we conducted experiments on several tasks from DMC and compared the average episode rewards of clients using our FedRAG method with \(\lambda=0.001\) and the non-federated Local method in both same and other environments, as illustrated in Figure~\ref{fig:dmc}. 
In cartpole-swing and finger-spin tasks, FedRAG significantly outperformed the Local method in other environments while maintaining near-optimal performance in the same environment. This success stems from its federated approach, which integrates global knowledge while preserving local training advantages.
In cheetah-run task, Local clients trained only on their own environments exhibited declining performance in other environments over time. In contrast, FedRAG maintained stable performance in other environments, benefiting from global knowledge. By the end of training, FedRAG outperformed the Local method in cross-environment evaluations.
In walker-walk task, FedRAG demonstrated faster convergence and higher episode rewards across all environments, benefiting from federated state projection functions that enhanced task-relevant feature extraction and generalization.
These results confirm the robustness and generalization of FedRAG across diverse tasks and environments.
\subsection{FedRAG Performance with Increasing Clients and Environmental Heterogeneity}
\label{heter}
\begin{figure}[h]
    \centering
    \vspace{-0.8em}
    \begin{subfigure}{0.48\textwidth}
    \vspace{-0.3em}
        \centering
        \includegraphics[width=\linewidth]{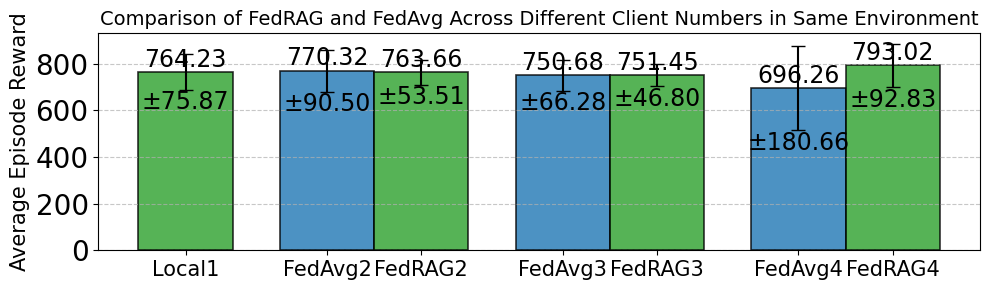}  
    \end{subfigure}
    \begin{subfigure}{0.48\textwidth}
        \centering
        \includegraphics[width=\linewidth]{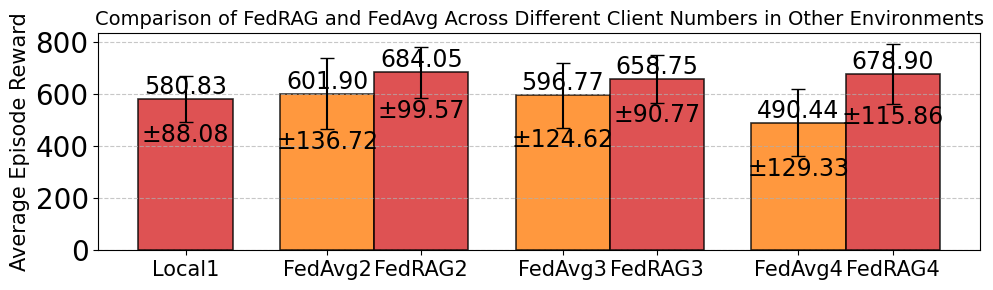}  
    \end{subfigure}
    \caption{Comparison of FedRAG and FedAvg across different client numbers in same and other environments.}
    \label{fig:fedrag_performance4}
\end{figure}
\vspace{-0.3em}
\noindent We evaluated FedRAG's performance in heterogeneous environments with increasing client numbers $N$, comparing it to FedAvg. Using $N$ pole lengths sampled uniformly from $[0.9, 1]$, we created diverse CartPole environments, with $N$ ranging from 1 to 4. 
As $N$ increased, greater environmental heterogeneity hindered policy convergence, while additional clients provided more learning information. As shown in Figure~\ref{fig:fedrag_performance4}, FedAvg's performance deteriorated significantly in other environments, while FedRAG remained stable across both same and other settings, demonstrating its robustness to heterogeneity.
This stability is attributed to FedRAG's behavior metric-based state projection, which captures task-relevant features and filters out environment-specific noise, enabling effective generalization.
The L2 regularization ensures local updates align with the global model, capturing shared dynamics across clients and effectively adapting the projection function as the client count increases.

\section{Conclusion}
Sharing the parameters of the approximated behavior metric-based state projection function enhances the performance of FRL and protects sensitive local information.
In this work, we propose FedRAG, a FRL framework that shares the parameters of the state projections among clients. 
Under the FedRAG framework, we introduce a behavioral metric-based state projection function and develop its practical approximation algorithm in Federated Learning settings.
We conduct empirical studies on several reinforcement learning tasks to verify the effectiveness of our proposed method.

\section*{Acknowledgements}
This work is supported by the Science Foundation of China University of Petroleum, Beijing (Grant No. 2462023YJRC024) and the Frontier Interdisciplinary Exploration Research Program of China University of Petroleum, Beijing (Grant No. 2462024XKQY003). Zhongqi Lu is the corresponding author.

\bibliographystyle{named}
\bibliography{ijcai25}

\begin{thebibliography}{}

\bibitem[\protect\citeauthoryear{Abadi \bgroup \em et al.\egroup }{2016}]{abadi2016deep}
Martin Abadi, Andy Chu, Ian Goodfellow, H~Brendan McMahan, Ilya Mironov, Kunal Talwar, and Li~Zhang.
\newblock Deep learning with differential privacy.
\newblock In {\em Proceedings of the 2016 ACM SIGSAC conference on computer and communications security}, pages 308--318, 2016.

\bibitem[\protect\citeauthoryear{Agarwal \bgroup \em et al.\egroup }{2021}]{agarwal2021contrastive}
Rishabh Agarwal, Marlos~C Machado, Pablo~Samuel Castro, and Marc~G Bellemare.
\newblock Contrastive behavioral similarity embeddings for generalization in reinforcement learning.
\newblock {\em arXiv preprint arXiv:2101.05265}, 2021.

\bibitem[\protect\citeauthoryear{Anwar and Raychowdhury}{2021}]{DBLP:journals/corr/abs-2103-06473}
Aqeel Anwar and Arijit Raychowdhury.
\newblock Multi-task federated reinforcement learning with adversaries.
\newblock {\em CoRR}, abs/2103.06473, 2021.

\bibitem[\protect\citeauthoryear{Castro \bgroup \em et al.\egroup }{2021}]{castro2021mico}
Pablo~Samuel Castro, Tyler Kastner, Prakash Panangaden, and Mark Rowland.
\newblock Mico: Improved representations via sampling-based state similarity for markov decision processes.
\newblock {\em Advances in Neural Information Processing Systems}, 34:30113--30126, 2021.

\bibitem[\protect\citeauthoryear{Castro}{2020}]{castro2020scalable}
Pablo~Samuel Castro.
\newblock Scalable methods for computing state similarity in deterministic markov decision processes.
\newblock In {\em Proceedings of the AAAI Conference on Artificial Intelligence}, volume~34, pages 10069--10076, 2020.

\bibitem[\protect\citeauthoryear{Chen and Pan}{2022}]{chen2022learning}
Jianda Chen and Sinno Pan.
\newblock Learning representations via a robust behavioral metric for deep reinforcement learning.
\newblock {\em Advances in Neural Information Processing Systems}, 35:36654--36666, 2022.

\bibitem[\protect\citeauthoryear{Collins \bgroup \em et al.\egroup }{2021}]{collins2021exploiting}
Liam Collins, Hamed Hassani, Aryan Mokhtari, and Sanjay Shakkottai.
\newblock Exploiting shared representations for personalized federated learning.
\newblock In {\em International conference on machine learning}, pages 2089--2099. PMLR, 2021.

\bibitem[\protect\citeauthoryear{Fallah \bgroup \em et al.\egroup }{2020}]{fallah2020personalized}
Alireza Fallah, Aryan Mokhtari, and Asuman Ozdaglar.
\newblock Personalized federated learning with theoretical guarantees: A model-agnostic meta-learning approach.
\newblock {\em Advances in neural information processing systems}, 33:3557--3568, 2020.

\bibitem[\protect\citeauthoryear{Fan \bgroup \em et al.\egroup }{2021}]{fan2021fault}
Xiaofeng Fan, Yining Ma, Zhongxiang Dai, Wei Jing, Cheston Tan, and Bryan Kian~Hsiang Low.
\newblock Fault-tolerant federated reinforcement learning with theoretical guarantee.
\newblock {\em Advances in Neural Information Processing Systems}, 34:1007--1021, 2021.

\bibitem[\protect\citeauthoryear{Fan \bgroup \em et al.\egroup }{2023}]{fan2023fedhql}
Flint~Xiaofeng Fan, Yining Ma, Zhongxiang Dai, Cheston Tan, Bryan Kian~Hsiang Low, and Roger Wattenhofer.
\newblock Fedhql: Federated heterogeneous q-learning.
\newblock {\em arXiv preprint arXiv:2301.11135}, 2023.

\bibitem[\protect\citeauthoryear{Fang and Qian}{2021}]{fang2021privacy}
Haokun Fang and Quan Qian.
\newblock Privacy preserving machine learning with homomorphic encryption and federated learning.
\newblock {\em Future Internet}, 13(4):94, 2021.

\bibitem[\protect\citeauthoryear{Ferns \bgroup \em et al.\egroup }{2011}]{ferns2011bisimulation}
Norm Ferns, Prakash Panangaden, and Doina Precup.
\newblock Bisimulation metrics for continuous markov decision processes.
\newblock {\em SIAM Journal on Computing}, 40(6):1662--1714, 2011.

\bibitem[\protect\citeauthoryear{Geyer \bgroup \em et al.\egroup }{2017}]{geyer2017differentially}
Robin~C Geyer, Tassilo Klein, and Moin Nabi.
\newblock Differentially private federated learning: A client level perspective.
\newblock {\em arXiv preprint arXiv:1712.07557}, 2017.

\bibitem[\protect\citeauthoryear{Haarnoja \bgroup \em et al.\egroup }{2018a}]{haarnoja2018soft}
Tuomas Haarnoja, Aurick Zhou, Pieter Abbeel, and Sergey Levine.
\newblock Soft actor-critic: Off-policy maximum entropy deep reinforcement learning with a stochastic actor.
\newblock In {\em International conference on machine learning}, pages 1861--1870. PMLR, 2018.

\bibitem[\protect\citeauthoryear{Haarnoja \bgroup \em et al.\egroup }{2018b}]{haarnoja2018soft2}
Tuomas Haarnoja, Aurick Zhou, Kristian Hartikainen, George Tucker, Sehoon Ha, Jie Tan, Vikash Kumar, Henry Zhu, Abhishek Gupta, Pieter Abbeel, et~al.
\newblock Soft actor-critic algorithms and applications.
\newblock {\em arXiv preprint arXiv:1812.05905}, 2018.

\bibitem[\protect\citeauthoryear{Jin \bgroup \em et al.\egroup }{2022}]{jin2022federated}
Hao Jin, Yang Peng, Wenhao Yang, Shusen Wang, and Zhihua Zhang.
\newblock Federated reinforcement learning with environment heterogeneity.
\newblock In {\em International Conference on Artificial Intelligence and Statistics}, pages 18--37. PMLR, 2022.

\bibitem[\protect\citeauthoryear{Kemertas and Aumentado-Armstrong}{2021}]{kemertas2021towards}
Mete Kemertas and Tristan Aumentado-Armstrong.
\newblock Towards robust bisimulation metric learning.
\newblock {\em Advances in Neural Information Processing Systems}, 34:4764--4777, 2021.

\bibitem[\protect\citeauthoryear{Li \bgroup \em et al.\egroup }{2020a}]{li2020federated2}
Tian Li, Anit~Kumar Sahu, Ameet Talwalkar, and Virginia Smith.
\newblock Federated learning: Challenges, methods, and future directions.
\newblock {\em IEEE signal processing magazine}, 37(3):50--60, 2020.

\bibitem[\protect\citeauthoryear{Li \bgroup \em et al.\egroup }{2020b}]{li2020federated}
Tian Li, Anit~Kumar Sahu, Manzil Zaheer, Maziar Sanjabi, Ameet Talwalkar, and Virginia Smith.
\newblock Federated optimization in heterogeneous networks.
\newblock {\em Proceedings of Machine learning and systems}, 2:429--450, 2020.

\bibitem[\protect\citeauthoryear{Li \bgroup \em et al.\egroup }{2021a}]{li2021model}
Qinbin Li, Bingsheng He, and Dawn Song.
\newblock Model-contrastive federated learning.
\newblock In {\em Proceedings of the IEEE/CVF conference on computer vision and pattern recognition}, pages 10713--10722, 2021.

\bibitem[\protect\citeauthoryear{Li \bgroup \em et al.\egroup }{2021b}]{li2021survey}
Qinbin Li, Zeyi Wen, Zhaomin Wu, Sixu Hu, Naibo Wang, Yuan Li, Xu~Liu, and Bingsheng He.
\newblock A survey on federated learning systems: Vision, hype and reality for data privacy and protection.
\newblock {\em IEEE Transactions on Knowledge and Data Engineering}, 35(4):3347--3366, 2021.

\bibitem[\protect\citeauthoryear{Liang \bgroup \em et al.\egroup }{2020}]{liang2020think}
Paul~Pu Liang, Terrance Liu, Liu Ziyin, Nicholas~B Allen, Randy~P Auerbach, David Brent, Ruslan Salakhutdinov, and Louis-Philippe Morency.
\newblock Think locally, act globally: Federated learning with local and global representations.
\newblock {\em arXiv preprint arXiv:2001.01523}, 2020.

\bibitem[\protect\citeauthoryear{Liao \bgroup \em et al.\egroup }{2023}]{article}
Weijian Liao, Zongzhang Zhang, and Yang Yu.
\newblock Policy-independent behavioral metric-based representation for deep reinforcement learning.
\newblock {\em Proceedings of the AAAI Conference on Artificial Intelligence}, 37:8746--8754, 06 2023.

\bibitem[\protect\citeauthoryear{Lyu \bgroup \em et al.\egroup }{2020}]{lyu2020threatsfederatedlearningsurvey}
Lingjuan Lyu, Han Yu, and Qiang Yang.
\newblock Threats to federated learning: A survey, 2020.

\bibitem[\protect\citeauthoryear{McMahan \bgroup \em et al.\egroup }{2017}]{mcmahan2017communication}
Brendan McMahan, Eider Moore, Daniel Ramage, Seth Hampson, and Blaise~Aguera y~Arcas.
\newblock Communication-efficient learning of deep networks from decentralized data.
\newblock In {\em Artificial intelligence and statistics}, pages 1273--1282. PMLR, 2017.

\bibitem[\protect\citeauthoryear{Mohassel and Rindal}{2018}]{mohassel2018aby3}
Payman Mohassel and Peter Rindal.
\newblock Aby3: A mixed protocol framework for machine learning.
\newblock In {\em Proceedings of the 2018 ACM SIGSAC conference on computer and communications security}, pages 35--52, 2018.

\bibitem[\protect\citeauthoryear{Mothukuri \bgroup \em et al.\egroup }{2021}]{mothukuri2021survey}
Viraaji Mothukuri, Reza~M Parizi, Seyedamin Pouriyeh, Yan Huang, Ali Dehghantanha, and Gautam Srivastava.
\newblock A survey on security and privacy of federated learning.
\newblock {\em Future Generation Computer Systems}, 115:619--640, 2021.

\bibitem[\protect\citeauthoryear{Park and Lim}{2022}]{park2022privacy}
Jaehyoung Park and Hyuk Lim.
\newblock Privacy-preserving federated learning using homomorphic encryption.
\newblock {\em Applied Sciences}, 12(2):734, 2022.

\bibitem[\protect\citeauthoryear{Qi \bgroup \em et al.\egroup }{2021}]{qi2021federated}
Jiaju Qi, Qihao Zhou, Lei Lei, and Kan Zheng.
\newblock Federated reinforcement learning: Techniques, applications, and open challenges.
\newblock {\em arXiv preprint arXiv:2108.11887}, 2021.

\bibitem[\protect\citeauthoryear{Reddi \bgroup \em et al.\egroup }{2020}]{reddi2020adaptive}
Sashank Reddi, Zachary Charles, Manzil Zaheer, Zachary Garrett, Keith Rush, Jakub Kone{\v{c}}n{\`y}, Sanjiv Kumar, and H~Brendan McMahan.
\newblock Adaptive federated optimization.
\newblock {\em arXiv preprint arXiv:2003.00295}, 2020.

\bibitem[\protect\citeauthoryear{T~Dinh \bgroup \em et al.\egroup }{2020}]{t2020personalized}
Canh T~Dinh, Nguyen Tran, and Josh Nguyen.
\newblock Personalized federated learning with moreau envelopes.
\newblock {\em Advances in neural information processing systems}, 33:21394--21405, 2020.

\bibitem[\protect\citeauthoryear{Tan \bgroup \em et al.\egroup }{2022}]{tan2022federated}
Yue Tan, Guodong Long, Jie Ma, Lu~Liu, Tianyi Zhou, and Jing Jiang.
\newblock Federated learning from pre-trained models: A contrastive learning approach.
\newblock {\em Advances in neural information processing systems}, 35:19332--19344, 2022.

\bibitem[\protect\citeauthoryear{Tang \bgroup \em et al.\egroup }{2022}]{tang2022fesac}
Fengxiao Tang, Yilin Yang, Xin Yao, Ming Zhao, and Nei Kato.
\newblock Fesac: Federated learning-based soft actor-critic traffic offloading in space-air-ground integrated network.
\newblock {\em arXiv preprint arXiv:2212.02075}, 2022.

\bibitem[\protect\citeauthoryear{Tassa \bgroup \em et al.\egroup }{2018}]{tassa2018deepmind}
Yuval Tassa, Yotam Doron, Alistair Muldal, Tom Erez, Yazhe Li, Diego de~Las Casas, David Budden, Abbas Abdolmaleki, Josh Merel, Andrew Lefrancq, et~al.
\newblock Deepmind control suite.
\newblock {\em arXiv preprint arXiv:1801.00690}, 2018.

\bibitem[\protect\citeauthoryear{Truex \bgroup \em et al.\egroup }{2019}]{truex2019hybrid}
Stacey Truex, Nathalie Baracaldo, Ali Anwar, Thomas Steinke, Heiko Ludwig, Rui Zhang, and Yi~Zhou.
\newblock A hybrid approach to privacy-preserving federated learning.
\newblock In {\em Proceedings of the 12th ACM workshop on artificial intelligence and security}, pages 1--11, 2019.

\bibitem[\protect\citeauthoryear{Wang \bgroup \em et al.\egroup }{2020}]{wang2020optimizing}
Hao Wang, Zakhary Kaplan, Di~Niu, and Baochun Li.
\newblock Optimizing federated learning on non-iid data with reinforcement learning.
\newblock In {\em IEEE INFOCOM 2020-IEEE conference on computer communications}, pages 1698--1707. IEEE, 2020.

\bibitem[\protect\citeauthoryear{Wang \bgroup \em et al.\egroup }{2024}]{wang2024turbosvm}
Mengdi Wang, Anna Bodonhelyi, Efe Bozkir, and Enkelejda Kasneci.
\newblock Turbosvm-fl: Boosting federated learning through svm aggregation for lazy clients.
\newblock In {\em Proceedings of the AAAI Conference on Artificial Intelligence}, volume~38, pages 15546--15554, 2024.

\bibitem[\protect\citeauthoryear{Wei \bgroup \em et al.\egroup }{2020}]{wei2020federated}
Kang Wei, Jun Li, Ming Ding, Chuan Ma, Howard~H Yang, Farhad Farokhi, Shi Jin, Tony~QS Quek, and H~Vincent Poor.
\newblock Federated learning with differential privacy: Algorithms and performance analysis.
\newblock {\em IEEE transactions on information forensics and security}, 15:3454--3469, 2020.

\bibitem[\protect\citeauthoryear{Yang \bgroup \em et al.\egroup }{2019a}]{10.1145/3298981}
Qiang Yang, Yang Liu, Tianjian Chen, and Yongxin Tong.
\newblock Federated machine learning: Concept and applications.
\newblock {\em ACM Trans. Intell. Syst. Technol.}, 10(2), January 2019.

\bibitem[\protect\citeauthoryear{Yang \bgroup \em et al.\egroup }{2019b}]{federatedlearning}
Qiang Yang, Yang Liu, Yong Cheng, Yan Kang, Tianjian Chen, and Han Yu.
\newblock Federated learning.
\newblock {\em Synthesis Lectures on Artificial Intelligence and Machine Learning}, 13:1--207, 12 2019.

\bibitem[\protect\citeauthoryear{Zhang \bgroup \em et al.\egroup }{2020a}]{DBLP:journals/corr/abs-2006-10742}
Amy Zhang, Rowan McAllister, Roberto Calandra, Yarin Gal, and Sergey Levine.
\newblock Learning invariant representations for reinforcement learning without reconstruction.
\newblock {\em CoRR}, abs/2006.10742, 2020.

\bibitem[\protect\citeauthoryear{Zhang \bgroup \em et al.\egroup }{2020b}]{zhang2020batchcrypt}
Chengliang Zhang, Suyi Li, Junzhe Xia, Wei Wang, Feng Yan, and Yang Liu.
\newblock $\{$BatchCrypt$\}$: Efficient homomorphic encryption for $\{$Cross-Silo$\}$ federated learning.
\newblock In {\em 2020 USENIX annual technical conference (USENIX ATC 20)}, pages 493--506, 2020.

\bibitem[\protect\citeauthoryear{Zhang \bgroup \em et al.\egroup }{2022}]{zhang2022no}
Xiaojin Zhang, Hanlin Gu, Lixin Fan, Kai Chen, and Qiang Yang.
\newblock No free lunch theorem for security and utility in federated learning.
\newblock {\em ACM Transactions on Intelligent Systems and Technology}, 14(1):1--35, 2022.

\bibitem[\protect\citeauthoryear{Zhang \bgroup \em et al.\egroup }{2023}]{zhang2023federated}
Fengda Zhang, Kun Kuang, Long Chen, Zhaoyang You, Tao Shen, Jun Xiao, Yin Zhang, Chao Wu, Fei Wu, Yueting Zhuang, et~al.
\newblock Federated unsupervised representation learning.
\newblock {\em Frontiers of Information Technology \& Electronic Engineering}, 24(8):1181--1193, 2023.

\bibitem[\protect\citeauthoryear{Zhao \bgroup \em et al.\egroup }{2018}]{zhao2018federated}
Yue Zhao, Meng Li, Liangzhen Lai, Naveen Suda, Damon Civin, and Vikas Chandra.
\newblock Federated learning with non-iid data.
\newblock {\em arXiv preprint arXiv:1806.00582}, 2018.

\bibitem[\protect\citeauthoryear{Zhu \bgroup \em et al.\egroup }{2019}]{zhu2019deepleakagegradients}
Ligeng Zhu, Zhijian Liu, and Song Han.
\newblock Deep leakage from gradients, 2019.

\bibitem[\protect\citeauthoryear{Zhuo \bgroup \em et al.\egroup }{2019}]{zhuo2019federated}
Hankz~Hankui Zhuo, Wenfeng Feng, Yufeng Lin, Qian Xu, and Qiang Yang.
\newblock Federated deep reinforcement learning.
\newblock {\em arXiv preprint arXiv:1901.08277}, 2019.

\end{thebibliography}

\appendix

\section{Experimental Details}
\label{appendix:all}
\subsection{Networks and Hyperparameters}

\begin{table}[t]
\centering
\begin{tabular}{l l}
\toprule
\textbf{Hyperparameter} & \textbf{Value} \\
\midrule
Episode length                  & 1000 \\
Training steps                  & 500,000 \\
Replay buffer capacity          & 20,000 \\
Batch size                      & 128 \\
Discount factor (\(\gamma\))    & 0.99 \\
Optimizer                       & Adam \\
Network learning rate           & \( 5 \times 10^{-4} \) \\
Log \(\alpha\) learning rate    & \( 1 \times 10^{-4} \) \\
\(\tau_\phi\)                   & 0.05 \\
\(\tau_Q\)                      & 0.01 \\
Target Q-network update frequency & 2 \\
Actor network update frequency  & 2 \\
\(\alpha_{RAG}\)                & 0.5 \\
\(\alpha_P\)                    & \( 1 \times 10^{-4} \) \\
Actor log std bound             & [-10, 2] \\
Action repeat (CartPole/Cheetah)& 8 / 4 \\
Action repeat (Finger/Walker)   & 2 \\
\bottomrule
\end{tabular}
\caption{Networks hyperparameters}
\label{sampletable}
\end{table}

Each client's Q networks include a state encoder \( \phi_\omega \), which consists of stacked convolutional layers and a fully connected layer. It processes 3 stacked frames to produce the state representation \( \phi_\omega(s) \) with input dimensions of \( 9 \times 84 \times 84 \), convolutional kernels \([3, 3, 3, 3]\), 32 channels, and strides \([2, 1, 1, 1]\), resulting in an output dimension of 100. The Q-network has three fully connected layers with 1024 hidden units, taking input from \( \phi_\omega(s) \) and action \( a \). The actor network also consists of three fully connected layers that output the policy \( \pi \). Both the dynamics model \( \hat{P}_\eta \) and the reward function \( \hat{R}_\xi \) are two-layer MLPs with 512 hidden units, using ReLU activation. This architecture efficiently generates policies and Q-values from state inputs. Other hyperparameters are listed in Table~\ref{sampletable}.

\subsection{Ablation Study on FedRAG Client Updates}
\begin{figure}[h]
\begin{center}
\includegraphics[width=\linewidth]{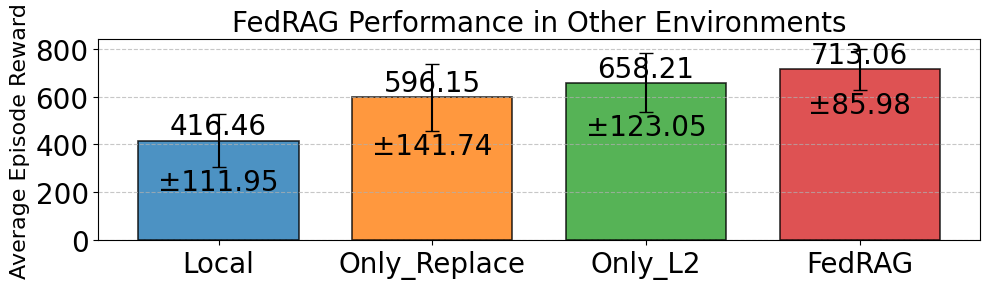}
\end{center}
\caption{Performance comparison of FedRAG and variants in other environments.}
\label{fig:fedrag_ablation}
\end{figure}
\noindent The FedRAG client update formula in Equation 15 has two key components for data sharing: replacing local parameters with global ones during distribution and applying L2 regularization to align local updates with global parameters.

To evaluate the impact of these components, we conducted ablation experiments, as shown in Figure~\ref{fig:fedrag_ablation}. We compared four approaches: Local (no federated learning), Only\_Replace (global parameters replace local ones without L2 regularization), Only\_L2 (L2 regularization without replacing local parameters), and FedRAG (both global replacement and L2 regularization). The metrics measured were the average episode reward and standard deviation in different environments.

The results show that replacing local parameters with global ones improves generalization by leveraging shared knowledge, while L2 regularization enhances robustness by preventing overfitting. Omitting either component resulted in significant performance declines, confirming their essential role in our federated learning approach.

\subsection{Distracting DeepMind Control Suite}
\begin{figure}[htbp]
    \centering
    \begin{subfigure}[b]{0.235\textwidth}
        \centering
        \includegraphics[width=0.48\textwidth]{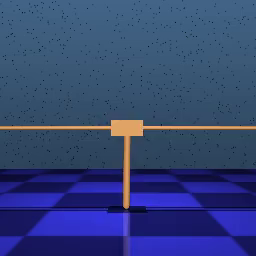}
        \includegraphics[width=0.48\textwidth]{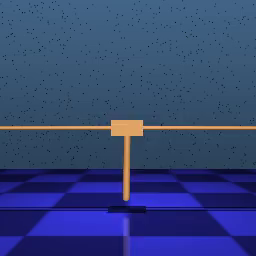}
        \caption*{(a) original background}
    \end{subfigure}
    \begin{subfigure}[b]{0.235\textwidth}
        \centering
        \includegraphics[width=0.48\textwidth]{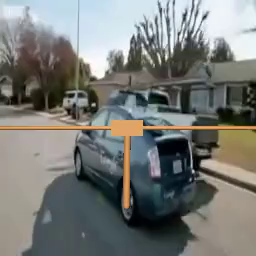}
        \includegraphics[width=0.48\textwidth]{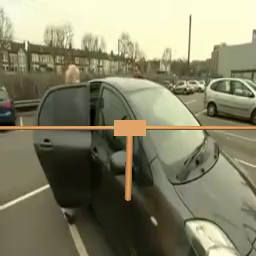}
        \caption*{(b) natural video background}
    \end{subfigure}
    \caption{Illustrations of observations in DMC cartpole-swingup task for pole lengths 1.0 and 0.9.}
    \label{fig:Observation1}
\end{figure}

\begin{figure}[h]
\begin{center}
\includegraphics[width=\linewidth]{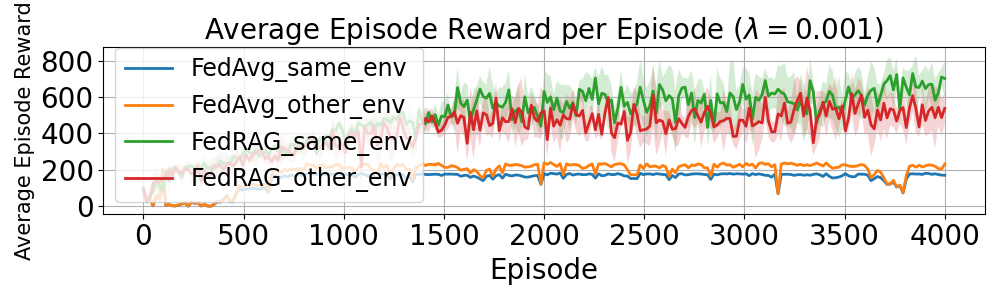}
\end{center}
\caption{Performance comparison of FedRAG and FedAvg with background distraction.}
\label{fig:fedrag_distraction}
\end{figure}

\noindent To evaluate the generalization and robustness of our method, we conducted experiments using the CartPole task in the DeepMind Control Suite, with background distractions and varying pole lengths to simulate different environments, as shown in Figure~\ref{fig:Observation1}. We replaced the background with clips from the Kinetics dataset, which serves as a distraction for the RL algorithm. We selected 1,000 continuous frames from the video dataset for training the reinforcement learning clients and evaluated them using another 1,000 frames.

The results presented in Figure~\ref{fig:fedrag_distraction} demonstrate that FedRAG outperforms FedAvg in both the same and other environments, confirming that our method is more effective at learning generalizable state representations and better at capturing task-relevant information, even in complex settings.

\subsection{Generalization Evaluation in Unseen Environments}
\begin{figure}[h]
\begin{center}
\includegraphics[width=\linewidth]{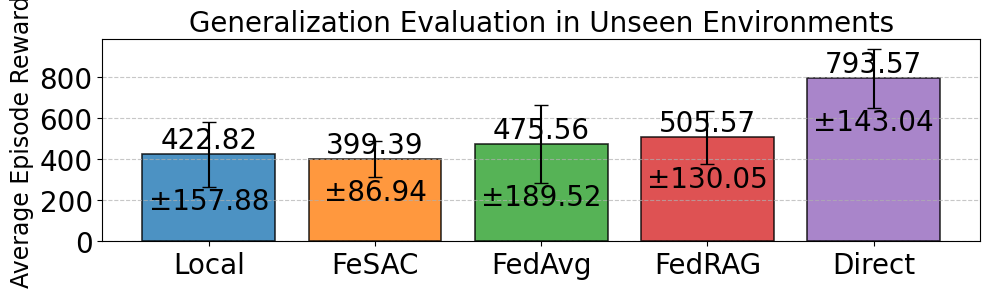}
\end{center}
\caption{Comparison of FedRAG and Baseline in unseen environment.}
\label{fig:fedrag_unseen}
\end{figure}

To further evaluate the generalization ability of our proposed FedRAG method, we took the clients trained in Section 5.2 and tested them in a completely unseen environment, where none of the clients had prior exposure. We assessed their average episode reward, and the results are shown in Figure~\ref{fig:fedrag_unseen}. Our method outperformed FedAvg and Local, achieving performance close to that of the client trained directly in the unseen environment. In contrast, FeSAC showed poor performance. These results demonstrate that our approach enables clients to generalize more effectively to new, previously unseen environments.

\subsection{Illustrations of Observations in Various DMC Tasks}
\label{app:Illu}
\begin{figure}[htbp]
    \centering
    \begin{subfigure}[b]{0.235\textwidth}
        \centering
        \includegraphics[width=0.45\textwidth]{Images/cartpole1_1.png}
        \includegraphics[width=0.45\textwidth]{Images/cartpole2_1.png}
        \caption*{(a) cartpole-swing}
    \end{subfigure}
    \hfill
    \begin{subfigure}[b]{0.235\textwidth}
        \centering
        \includegraphics[width=0.45\textwidth]{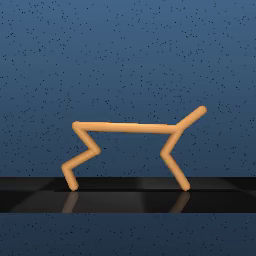}
        \includegraphics[width=0.45\textwidth]{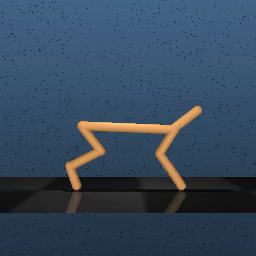}
        \caption*{(b) cheetah-run}
    \end{subfigure}
    \hfill
    \begin{subfigure}[b]{0.235\textwidth}
        \centering
        \includegraphics[width=0.45\textwidth]{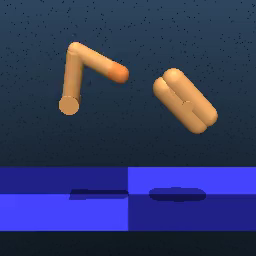}
        \includegraphics[width=0.45\textwidth]{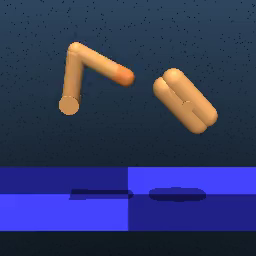}
        \caption*{(c) finger-spin}
    \end{subfigure}
    \hfill
    \begin{subfigure}[b]{0.235\textwidth}
        \centering
        \includegraphics[width=0.45\textwidth]{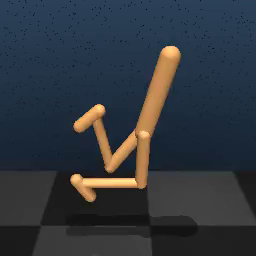}
        \includegraphics[width=0.45\textwidth]{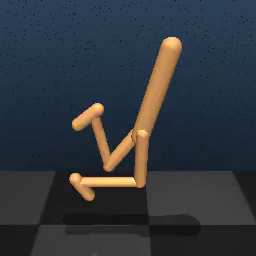}
        \caption*{(d) walker-walk}
    \end{subfigure}
    \caption{Illustrations of observations in DMC tasks for pole lengths (1.0 and 0.9), cheetah torso lengths (1.0 and 0.9), finger distal lengths (0.16 and 0.18), and walker torso lengths (0.3 and 0.35)}
    \label{fig:Observation2}
\end{figure}

\noindent As shown in Figure~\ref{fig:Observation2}, we simulated different environments by modifying key physical parameters for several tasks from the DeepMind Control Suite, including cartpole-swing, cheetah-run, finger-spin, and walker-walk. Each task has a unique goal: balancing a swinging pole in cartpole-swing, maximizing speed in cheetah-run, rotating a finger in finger-spin, and simulating bipedal locomotion in walker-walk. 

\section{Properties and Proofs of the RAG Distance}
\label{prove2}
\begin{theorem}
\label{th1}
The function $d^\pi$ is a contraction mapping with respect to the $L_\infty$ norm and has a unique fixed-point $D^\pi$.
\end{theorem}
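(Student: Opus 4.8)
The plan is to recast $d^\pi$ as the fixed point of an operator on a complete metric space and then apply the Banach fixed-point theorem. Fix a client $k$ and suppress the superscript $k$ for readability. Let $\mathbb{M}$ denote the space of bounded real-valued functions on $S^k \times S^k$, equipped with the norm $\|d\|_\infty = \sup_{s_i, s_j} |d(s_i, s_j)|$; this space is complete. I would define the operator $\mathcal{F}: \mathbb{M} \to \mathbb{M}$ by
\begin{equation}
(\mathcal{F}d)(s_i, s_j) = \left| \mathbb{E}_{a_i \sim \pi} r^{a_i}_{s_i} - \mathbb{E}_{a_j \sim \pi} r^{a_j}_{s_j} \right| + \gamma \, \mathbb{E}_{a_i, a_j \sim \pi} \, d\big(\mathbb{E}[s_{i+1}], \mathbb{E}[s_{j+1}]\big),
\end{equation}
so that the defining recursion of $d^\pi$ is precisely the fixed-point equation $\mathcal{F}d^\pi = d^\pi$. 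It then suffices to show that $\mathcal{F}$ is a $\gamma$-contraction on $\mathbb{M}$, after which uniqueness of the fixed point $D^\pi$ is immediate.

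The key step is the contraction estimate, and the crucial observation is that the reward-difference term carries no dependence on the argument $d$. Hence for any $d_1, d_2 \in \mathbb{M}$ this term cancels in the difference, leaving only the discounted recursive part:
\begin{equation}
\big| (\mathcal{F}d_1)(s_i, s_j) - (\mathcal{F}d_2)(s_i, s_j) \big| = \gamma \left| \mathbb{E}_{a_i, a_j \sim \pi} \big[ d_1 - d_2 \big]\big(\mathbb{E}[s_{i+1}], \mathbb{E}[s_{j+1}]\big) \right|.
\end{equation}
Using that expectation is monotone and non-expansive (the triangle inequality for expectations), I would bound the right-hand side by $\gamma \, \mathbb{E}_{a_i, a_j \sim \pi} |d_1 - d_2| \le \gamma \|d_1 - d_2\|_\infty$, uniformly in $(s_i, s_j)$. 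Taking the supremum over all state pairs yields $\|\mathcal{F}d_1 - \mathcal{F}d_2\|_\infty \le \gamma \|d_1 - d_2\|_\infty$, and since $\gamma < 1$ this is the desired contraction. Banach's theorem then gives a unique fixed point $D^\pi$, to which iterates of $\mathcal{F}$ from any starting function converge.

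The main obstacle I anticipate is not the contraction inequality itself, which is routine once the cancellation is noted, but rather verifying that $\mathcal{F}$ genuinely maps $\mathbb{M}$ into itself. This requires the reward to be bounded, which then gives $\|\mathcal{F}d\|_\infty \le \sup_{s_i,s_j} |r_{s_i} - r_{s_j}| + \gamma \|d\|_\infty < \infty$, so boundedness is preserved. A subtler point is that the recursion evaluates $d$ at the \emph{expected} next states $\mathbb{E}[s_{i+1}]$ and $\mathbb{E}[s_{j+1}]$; these must be legitimate points of $S^k$ for the composition to be well defined, which holds when the state space is a vector (or convex) space so that the expectation lands in the domain. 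Once well-definedness and boundedness are established, the contraction estimate above closes the argument, and uniqueness of $D^\pi$ follows directly from the fixed-point theorem.
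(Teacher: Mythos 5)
Your proposal is correct and follows essentially the same route as the paper's proof: view the recursion as an operator on a function space, note that the reward term cancels in the difference so only the $\gamma$-discounted recursive part remains, bound it by $\gamma\|d_1-d_2\|_\infty$, and invoke Banach's fixed-point theorem. Your additional care about well-definedness (bounded rewards, expected next states lying in the state space, completeness of $\mathbb{M}$) makes explicit what the paper leaves implicit, but does not change the argument.
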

\begin{proof}
Let $D,D' \in \mathbb{M}$. We have
\begin{align*}
& \left|d^\pi(D)(s_i, s_j) - d^{\pi}(D')(s_i, s_j)\right| \\ = & \left|\gamma\sum_{a_i,a_j}\pi(a_i|s_i)\pi(a_j|s_j)(D - D')(\mathbb{E}[s'_i],\mathbb{E}[s'_j])\right|\
\\ \leq & \gamma || D - D'||_ {\infty}.
\end{align*}

Therefore, $d^\pi$ is a contraction mapping with respect to the $L_\infty$ norm.
By Banach's fixed-point theorem, it follows that $d^\pi$ has a unique fixed point, denoted by $D^\pi$. 
\end{proof}

Theorem~\ref{th1} provides a convergence guarantee for the RAG distance that by iterating $d^\pi$, the distance $D$ will converge to the fixed-point $D^\pi$.

\begin{theorem}[Value function difference bound]
\label{th2}
Given states $s_i$ and state $s_j$, and a policy $\pi$, we have
\begin{align*}
|V^\pi(s_i)-V^\pi(s_j)| \leq D^\pi(s_i,s_j).
\end{align*}
\end{theorem}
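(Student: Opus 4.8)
The plan is to show that the function $F(s_i,s_j) := |V^\pi(s_i) - V^\pi(s_j)|$ is a \emph{pre-fixed point} of the operator $d^\pi$, meaning $F \leq d^\pi(F)$ holds pointwise, and then to propagate this inequality to the genuine fixed point $D^\pi$ by exploiting the contraction property already established in Theorem~\ref{th1}. The two ingredients are thus a one-step Bellman estimate and a monotone fixed-point iteration.

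First I would expand $V^\pi$ through the Bellman recursion consistent with the RAG construction, namely $V^\pi(s) = \mathbb{E}_{a \sim \pi}[r^a_s] + \gamma\,\mathbb{E}_{a \sim \pi}[V^\pi(\mathbb{E}[s'])]$, where $\mathbb{E}[s']$ is the expected next state over the dynamics model. Subtracting the recursions written at $s_i$ and at $s_j$ and applying the triangle inequality splits $V^\pi(s_i) - V^\pi(s_j)$ into a reward-difference term, $|\mathbb{E}_{a_i \sim \pi}[r^{a_i}_{s_i}] - \mathbb{E}_{a_j \sim \pi}[r^{a_j}_{s_j}]|$, and a discounted successor-value term.

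Second, I would merge the two separate single-action expectations over the successor values into one joint expectation over the independent action pair $(a_i,a_j)$, which is legitimate because each policy marginal integrates to one, and then pull the absolute value inside the joint expectation via the triangle inequality. The resulting right-hand side is exactly $[d^\pi(F)](s_i,s_j)$, yielding $F \leq d^\pi(F)$. For the closing step I would first note that $d^\pi$ is monotone, since its reward term does not depend on the function argument while its transition term carries the nonnegative factor $\gamma$ against the probability-weighted sum $\pi(a_i|s_i)\pi(a_j|s_j)$; hence applying $d^\pi$ repeatedly preserves the inequality, $F \leq d^\pi(F) \leq (d^\pi)^2(F) \leq \cdots \leq (d^\pi)^n(F)$. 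By the contraction property of Theorem~\ref{th1} together with Banach's theorem, the iterates $(d^\pi)^n(F)$ converge in $L_\infty$ to the unique fixed point $D^\pi$, so letting $n \to \infty$ passes the pointwise bound to the limit and gives $F \leq D^\pi$, which is the claim.

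The main obstacle I anticipate is the bookkeeping in the second step, where the reward-difference and transition terms must be realigned so that the right-hand side matches the operator $d^\pi$ verbatim rather than merely resembling it; in particular one must confirm the monotonicity of $d^\pi$ explicitly, since it is monotonicity (not just the contraction) that lets the fixed-point iteration carry the one-step bound all the way through to the limit. The remaining algebra, including the interchange of the single-action and joint expectations, is routine once these two structural facts are in place.
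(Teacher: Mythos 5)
The paper itself states Theorem~\ref{th2} with no proof at all (only Theorem~\ref{th1} receives one in the appendix), so your attempt has to be judged on its own merits rather than against the authors' argument. Your overall skeleton is the right one, and it is the standard route for results of this type (it mirrors how the analogous bounds are proved for MICo and for bisimulation metrics): exhibit $F(s_i,s_j):=|V^\pi(s_i)-V^\pi(s_j)|$ as a pre-fixed point of the operator $d^\pi$, observe that $d^\pi$ is monotone because its transition term is a nonnegatively weighted sum, and then ride the monotone chain $F\le d^\pi(F)\le(d^\pi)^2(F)\le\cdots$ together with the Banach convergence $(d^\pi)^n(F)\to D^\pi$ from Theorem~\ref{th1}. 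Your second and third steps (merging the two marginal action expectations into a joint one, pulling the absolute value inside, monotonicity, passage to the limit) are all correct.

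The genuine gap is in your first step. You expand $V^\pi$ by the recursion $V^\pi(s)=\mathbb{E}_{a\sim\pi}[r^a_s]+\gamma\,\mathbb{E}_{a\sim\pi}\bigl[V^\pi(\mathbb{E}[s'])\bigr]$ and call it ``the Bellman recursion consistent with the RAG construction,'' but the actual Bellman equation is $V^\pi(s)=\mathbb{E}_{a\sim\pi}[r^a_s]+\gamma\,\mathbb{E}_{a\sim\pi}\mathbb{E}_{s'\sim P^a_s}[V^\pi(s')]$, and in general $\mathbb{E}_{s'}[V^\pi(s')]\neq V^\pi(\mathbb{E}[s'])$; exchanging the expectation with $V^\pi$ is precisely the approximation gap that gives the RAG metric its name, so asserting it as an identity is a missing hypothesis, not bookkeeping. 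Without it the inequality $F\le d^\pi(F)$ — and indeed the theorem itself — can fail. Concretely: take a single action and real-valued states, let $s_i$ move to $A=0$ or $B=2$ with probability $\tfrac12$ each, let $s_j$ move deterministically to $C=1$, let $A,B,C$ be absorbing with per-step rewards $r_A=r_C=0$, $r_B=1$, and $r(s_i)=r(s_j)=0$. Then $\mathbb{E}[s_i']=\mathbb{E}[s_j']=C$, so $D^\pi(s_i,s_j)=\gamma D^\pi(C,C)=0$, while $V^\pi(s_i)=\gamma/\bigl(2(1-\gamma)\bigr)>0=V^\pi(s_j)$, violating the claimed bound. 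Your proof becomes valid verbatim once you add the hypothesis that the transition dynamics are deterministic (so that $\mathbb{E}[s']=s'$) or that $V^\pi$ is affine on the support of each $P^a_s$ — an assumption that the underlying RAG/RAP line of work, and the deterministic DMC environments used in this paper, implicitly satisfy — so the fix is to state that assumption explicitly rather than to change the rest of the argument.
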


Theorem~\ref{th2} shows that the RAG distance between two states serves as an upper bound on the difference in their value functions, ensuring that essential behavioral information is preserved.

\end{document}